\newcommand{\calA}{{\mathcal{A}}}
\newcommand{\calS}{{\mathcal{S}}}
\newcommand{\calF}{{\mathcal{F}}}
\newcommand{\calT}{{\mathcal{T}}}
\newcommand{\calM}{{\mathcal{M}}}
\newcommand{\one}{\boldsymbol{1}}
\DeclarePairedDelimiter\abs{\lvert}{\rvert}
\newcommand{\eat}[1]{}
\newcommand{\rbr}[1]{\left(#1\right)}
\newcommand{\sbr}[1]{\left[#1\right]}
\newcommand{\cbr}[1]{\left\{#1\right\}}
\newcommand{\abr}[1]{\left|#1\right|}
\newcommand{\T}{\ensuremath{T_\star}}
\newcommand{\B}{B_\star}
\newcommand{\cmin}{\ensuremath{c_{\min}}}
\newcommand{\cmininv}{\ensuremath{c^{-1}_{\min}}}
\newcommand{\ssp}{\texttt{PSRL-SSP}}
\newcommand{\hatthetal}{\widehat{\theta}_\ell}
\newcommand{\sinit}{s_\text{init}}
\newcommand{\sumsp}{\sum_{s'\in\calS^+}}
\newcommand{\sumsdp}{\sum_{s''\in\calS^+}}
\newcommand{\suml}{\sum_{\ell = 1}^{L_M}}
\newcommand{\summ}{\sum_{m = 1}^{M}}
\newcommand{\sumtmplus}{\sum_{t = t_m+1}^{t_{m+1}-1}}
\newcommand{\oneomega}{\one_{\Omega^\ell_{s_t, a_t}}}
\newcommand{\sumtl}{\sum_{t=t_\ell}^{t_{\ell+1}-1}}
\newcommand{\sumtm}{\sum_{t=t_m}^{t_{m+1}-1}}
\newcommand{\thetalm}{\theta_{\ell}}
\newcommand{\Alsa}{A_\ell(s, a)}
\newcommand{\Alstat}{A_\ell(s_t, a_t)}
\newcommand{\Aldef}{\frac{\log (SAn_\ell^+(s, a)/\delta)}{n_\ell^+(s, a)}}
\newcommand{\Vlstat}{\mathbb{V}_\ell(s_t, a_t)}
\newcommand{\field}[1]{\mathbb{#1}}
\newcommand{\E}{\field{E}}
\newtheorem{lemma}{Lemma}
\newtheorem{theorem}{Theorem}
\newtheorem{remark}{Remark}
\newtheorem{assumption}{Assumption}
\newcommand{\order}{\ensuremath{\mathcal{O}}}
\newcommand{\otil}{\ensuremath{\tilde{\mathcal{O}}}}
\newcommand{\savehyperref}[2]{\texorpdfstring{\hyperref[#1]{#2}}{#2}}
\title{Online Learning for Stochastic Shortest Path Model via Posterior Sampling}
\author{
  Mehdi Jafarnia-Jahromi \\
  %Department of Computer Science\\
  University of Southern California\\
  %Los Angeles, CA 90007 \\
  \texttt{mjafarni@usc.edu} \\
  \And
  Liyu Chen\\ %\thanks{} \\
  %Department of Computer Science\\
  University of Southern California\\
  %Los Angeles, CA 90007 \\
  \texttt{liyuc@usc.edu} \\
  \And
  Rahul Jain \\
  %Department of Computer Science\\
  University of Southern California\\
  %Los Angeles, CA 90007 \\
  \texttt{rahul.jain@usc.edu} \\
  \And
  Haipeng Luo \\
  %Department of Computer Science\\
  University of Southern California\\
  %Los Angeles, CA 90007 \\
  \texttt{haipengl@usc.edu} \\
  % examples of more authors
  % \And
  % Coauthor \\
  % Affiliation \\
  % Address \\
  % \texttt{email} \\
}
\begin{document}
\maketitle

\begin{abstract}
We consider the problem of online reinforcement learning for the Stochastic Shortest Path (SSP) problem modeled as an unknown MDP with an absorbing state. We propose \ssp, a simple posterior sampling-based reinforcement learning algorithm for the SSP problem. The algorithm operates in epochs. At the beginning of each epoch, a sample is drawn from the posterior distribution on the unknown model dynamics, and the optimal policy with respect to the drawn sample is followed during that epoch. An epoch completes if either the  number of visits to the goal state in the current epoch exceeds that of the previous epoch, or the number of visits to any of the state-action pairs is doubled. We establish a Bayesian regret bound of $\otil(\B S\sqrt{AK})$, where $\B$ is an upper bound on the expected cost of the optimal policy, $S$ is the size of the state space, $A$ is the size of the action space, and $K$ is the number of episodes. The algorithm only requires the knowledge of the prior distribution, and has no hyper-parameters to tune. It is the first such posterior sampling algorithm and outperforms numerically previously proposed optimism-based algorithms.
\end{abstract}

% !TEX root = main21neurips-ssp.tex

\section{Introduction}
\label{sec:intro}
Stochastic Shortest Path (SSP) model considers the problem of an agent interacting with an environment to reach a predefined goal state while minimizing the cumulative expected cost. Unlike the finite-horizon and discounted Markov Decision Processes (MDPs) settings, in the SSP model,  the horizon of interaction between the agent and the environment depends on the agent's actions, and can possibly be unbounded (if the goal is not reached). A wide variety of goal-oriented control and reinforcement learning (RL) problems such as navigation, game playing, etc. can be formulated as SSP problems. In the RL setting,  where the SSP model is unknown, the agent interacts with the environment in $K$ \textit{episodes}. Each episode begins at a predefined initial state and ends when the agent reaches the goal (note that it might never reach the goal). We consider the setting where the state and action spaces are finite, the cost function is known, but the transition kernel is unknown. The performance of the agent is measured through the notion of \textit{regret}, i.e., the difference between the cumulative cost of the learning algorithm and that of the optimal policy during the $K$ episodes. 

The agent has to balance the well-known trade-off between \textit{exploration} and \textit{exploitation}: should the agent \textit{explore} the environment to gain information for future decisions, or should it \textit{exploit} the current information to minimize the cost? A general way to balance the exploration-exploitation trade-off is to use the \textit{Optimism in the Face of Uncertainty} (OFU) principle \citep{lai1985asymptotically}. The idea is to construct a set of plausible models based on the available information, select the model associated with the minimum cost, and follow the optimal policy with respect to the selected model. This idea is widely used in the RL literature for MDPs (e.g., \citep{jaksch2010near,azar2017minimax,fruit2018efficient,jin2018q,wei2020model,wei2021learning}) and also for SSP models \citep{tarbouriech2020no,rosenberg2020near,rosenberg2020stochastic,chen2021finding,tarbouriech2021stochastic}.

An alternative fundamental idea to encourage exploration is to use Posterior Sampling (PS) (also known as Thompson Sampling) \citep{thompson1933likelihood}. The idea is to maintain the posterior distribution on the unknown model parameters based on the available information and the prior distribution. PS algorithms usually proceed in \textit{epochs}. In the beginning of an epoch, a model is sampled from the posterior. The actions during the epoch are then selected according to the optimal policy associated with the sampled model. PS algorithms have two main advantages over OFU-type algorithms. First, the prior knowledge of the environment can be incorporated through the prior distribution. Second, PS algorithms have shown superior numerical performance on multi-armed bandit problems \citep{scott2010modern,chapelle2011empirical}, and MDPs \citep{osband2013more,osband2017posterior,ouyang2017learning}. 

The main difficulty in designing PS algorithms is the design of the epochs. In the basic setting of bandit problems, one can simply sample at every time step \citep{chapelle2011empirical}. In finite-horizon MDPs, where the length of an episode is predetermined and fixed, the epochs and episodes coincide, i.e., the agent can sample from the posterior distribution at the beginning of each episode \citep{osband2013more}. However, in the general SSP model, where the length of each episode is not predetermined and can possibly be unbounded, these natural choices for the epoch do not work. Indeed, the agent needs to switch policies during an episode if the current policy cannot reach the goal.

In this paper, we propose \ssp, the first PS-based RL algorithm for the SSP model. \ssp~starts a new epoch based on two criteria. According to the first criterion, a new epoch starts if the number of episodes within the current epoch exceeds that of the previous epoch. The second criterion is triggered when the number of visits to any state-action pair is doubled during an epoch, similar to the one used by \cite{bartlett2009regal,jaksch2010near,filippi2010optimism,dann2015sample,ouyang2017learning,rosenberg2020near}.
Intuitively speaking, in the early stages of the interaction between the agent and the environment, the second criterion triggers more often. This criterion is responsible for switching policies during an episode if the current policy cannot reach the goal. In the later stages of the interaction, the first criterion triggers more often and  encourages exploration. We prove a Bayesian regret bound of $\otil(\B S\sqrt{AK})$, where $S$ is the number of states, $A$ is the number of actions, $K$ is the number of episodes, and $\B$ is an upper bound on the expected cost of the optimal policy. This is similar to the regret bound of \cite{rosenberg2020near} 
%\footnote{The claimed bound in their paper is $\otil(\B^\frac{3}{2} S\sqrt{AK})$ if $\B$ is unknown, however, it can be improved by a factor of $\sqrt{\B}$ following the analysis of Theorem~\ref{thm2}.} 
and has a gap of $\sqrt{S}$ with the minimax lower bound. We note that concurrent works of \citet{tarbouriech2021stochastic} and \citet{cohen2021minimax} have closed the gap via OFU algorithms and blackbox reduction to the finite-horizon, respectively. However, the goal of this paper is not to match the minimax regret bound, but rather to introduce the first PS algorithm that has near-optimal regret bound with superior numerical performance than OFU algorithms. This is verified with the experiments in Section~\ref{sec: experiments}. The $\sqrt{S}$ gap with the lower bound exists for the PS algorithms in the finite-horizon \cite{osband2013more} and the infinite-horizon average-cost MDPs \citep{ouyang2017learning} as well. Thus, it remains an open question whether it is possible to achieve the lower bound via PS algorithms in these settings.

\paragraph*{Related Work.}
\textbf{Posterior Sampling.} The idea of PS algorithms dates back to the pioneering work of \cite{thompson1933likelihood}. The algorithm was ignored for several decades until recently. In the past two decades, PS algorithms have successfully been developed for various settings including multi-armed bandits (e.g., \cite{scott2010modern,chapelle2011empirical,kaufmann2012thompson,agrawal2012analysis,agrawal2013thompson}), MDPs (e.g., \citep{strens2000bayesian,osband2013more,fonteneau2013optimistic,gopalan2015thompson,osband2017posterior,kim2017thompson,ouyang2017learning,banjevic2019thompson}), Partially Observable MDPs \citep{jafarnia2021online}, and Linear Quadratic Control (e.g., \citep{abeille2017thompson,ouyang2017learningbased}). The interested reader is referred to \cite{russo2017tutorial} and references therein for a more comprehensive literature review.

\textbf{Online Learning in SSP.} Another related line of work is online learning in the SSP model which was introduced by \cite{tarbouriech2020no}. They proposed an algorithm with $\otil(K^{2/3})$ regret bound. Subsequent work of \cite{rosenberg2020near} improved the regret bound to $\otil(\B S\sqrt{AK})$. The concurrent works of \cite{cohen2021minimax,tarbouriech2021stochastic} proved a minimax regret bound of $\otil(\B\sqrt{SAK})$. However, none of these works propose a PS-type algorithm. We refer the interested reader to \cite{rosenberg2020stochastic,chen2020minimax,chen2021finding} for the SSP model with adversarial costs and \cite{tarbouriech2021sample} for sample complexity of the SSP model with a generative model.

\textbf{Comparison with \cite{ouyang2017learning}.} Our work is  related to \cite{ouyang2017learning} which proposes \texttt{TSDE}, a PS algorithm for infinite-horizon average-cost MDPs. However, clear distinctions exist both in the algorithm and analysis. From the algorithmic perspective, our first criterion in determining the epoch length is different from \texttt{TSDE}. Note that using the same epochs as \texttt{TSDE} leads to a sub-optimal regret bound of $\order(K^{2/3})$ in the SSP model setting. Moreover, following Hoeffding-type concentration as in \texttt{TSDE}, yields a regret bound of $\order(K^{2/3})$ in the SSP model setting. Instead, we propose a different analysis using Bernstein-type concentration inspired by the work of \cite{rosenberg2020near} to achieve the $\order(\sqrt{K})$ regret bound (see Lemma~\ref{lem: r3}).
% !TEX root = main21neurips-ssp.tex

\section{Preliminaries}
\label{sec: preliminaries}

A Stochastic Shortest Path (SSP) model is denoted by $\calM = (\calS, \calA, c, \theta, \sinit, g)$ where $\calS$ is the state space, $\calA$ is the action space, $c: \calS \times \calA \to [0, 1]$ is the cost function, $\sinit \in \calS$ is the initial state, $g \notin \calS$ is the goal state, and $\theta : \calS^+ \times \calS \times \calA \to [0, 1]$ represents the transition kernel such that $\theta(s' | s, a) = \mathbb{P}(s_t'=s'|s_t=s, a_t=a)$ where $\calS^+ = \calS \cup \{g\}$ includes the goal state as well. Here $s_t \in \calS$ and $a_t \in \calA$ are the state and action at time $t=1, 2, 3, \cdots$ and $s_t' \in \calS^+$ is the subsequent state. We assume that the initial state $\sinit$ is a fixed and known state and $\calS$ and $\calA$ are finite sets with size $S$ and $A$, respectively. A stationary policy is a deterministic map $\pi: \calS \to \calA$ that maps a state to an action. The \textit{value function} (also called the \textit{cost-to-go function}) associated with policy $\pi$ is a function $V^\pi(\cdot;\theta): \calS^+ \to [0, \infty]$ given by $V^{\pi}(g;\theta) = 0$ and $V^{\pi}(s;\theta) := \E[\sum_{t=1}^{\tau_{\pi}(s)}c(s_t, \pi(s_t)) | s_1=s]$ for $s \in \calS$, where $\tau_{\pi}(s)$  is  the number of steps before reaching the goal state (a random variable) if the initial state is $s$ and policy $\pi$ is followed throughout the episode. Here, we use the notation $V^\pi(\cdot;\theta)$ to explicitly show the dependence of the value function on $\theta$. Furthermore, the optimal value function can be defined as $V(s;\theta) = \min_{\pi} V^\pi(s;\theta)$. Policy $\pi$ is called \textit{proper} if the goal state is reached with probability $1$, starting from any initial state and following $\pi$ (i.e., $\max_s \tau_\pi (s) < \infty$ almost surely), otherwise it is called \textit{improper}.

We consider the reinforcement learning problem of an agent interacting with an SSP model $\calM = (\calS, \calA, c, \theta_*, \sinit, g)$ whose transition kernel $\theta_*$ is randomly generated according to the prior distribution $\mu_1$ at the beginning and is then fixed. We will focus on SSP models with transition kernels in the set $\Theta_{\B}$ with the following standard properties:
\begin{assumption}
\label{ass: class of ssp}
For all $\theta \in \Theta_{\B}$, the following holds: (1) there exists a proper policy,
(2) for all improper policies $\pi_\theta$, there exists a state $s \in \calS$, such that $V^{\pi_\theta}(s;\theta) = \infty$, and (3) the optimal value function $V(\cdot;\theta)$ satisfies $\max_s V(s;\theta) \leq \B$.
\end{assumption}
\citet{bertsekas1991analysis} prove that the first two conditions in Assumption~\ref{ass: class of ssp} imply that for each $\theta \in \Theta_{\B}$, the optimal policy is stationary, deterministic, proper, and can be obtained by the minimizer of the \textit{Bellman optimality equations} given by
\begin{align}
\label{eq: Bellman equation}
V(s;\theta) = \min_a \Big\{c(s, a) + \sum_{s'\in \calS^+}\theta(s'|s, a)V(s';\theta)\Big\}, \quad \forall s \in \calS.
\end{align}
Standard techniques such as Value Iteration and Policy Iteration can be used to compute the optimal policy if the SSP model is known \citep{bertsekas2017dynamic}. Here, we assume that $\calS$, $\calA$, and the cost function $c$ are known to the agent, however, the transition kernel $\theta_*$ is unknown. Moreover, we assume that the support of the prior distribution $\mu_1$ is a subset of $\Theta_{\B}$.

The agent interacts with the environment in $K$ episodes. Each episode starts from the initial state $\sinit$ and ends at the goal state $g$ (note that the agent may never reach the goal). At each time $t$, the agent observes state $s_t$ and takes action $a_t$. The environment then yields the next state $s_t' \sim \theta_*(\cdot | s_t, a_t)$. If the goal is reached (i.e., $s_t' = g$), then the current episode completes, a new episode starts, and $s_{t+1} = \sinit$. If the goal is not reached (i.e., $s_t' \neq g$), then $s_{t+1}=s_t'$. The goal of the agent is to minimize the expected cumulative cost after $K$ episodes, or equivalently, minimize the \textit{Bayesian regret} defined as
\begin{align*}
R_K &:= \E\sbr{\sum_{t=1}^{T_K} c(s_t, a_t) - KV(\sinit;\theta_*)},
\end{align*} 
where $T_K$ is the total number of time steps before reaching the goal state for the $K$th time, and $V(\sinit;\theta_*)$ is the optimal value function from \eqref{eq: Bellman equation}. Here, expectation is with respect to the prior distribution $\mu_1$ for $\theta_*$, the horizon $T_K$, the randomness in the state transitions, and the randomness of the algorithm. If the agent does not reach the goal state at any of the episodes (i.e., $T_K = \infty$), we define $R_K = \infty$.

% !TEX root = main21neurips-ssp.tex

\section{A Posterior Sampling RL Algorithm for SSP Models}
\label{sec: algorithm}
In this section, we propose the Posterior Sampling Reinforcement Learning (\ssp) ~algorithm (Algorithm~\ref{alg: posterior sampling}) for the SSP model. The input of the algorithm is the prior distribution $\mu_1$. At time $t$, the agent maintains the posterior distribution $\mu_t$ on the unknown parameter $\theta_*$ given by $\mu_t(\Theta) = \mathbb{P}(\theta_* \in \Theta | \calF_t)$ for any set $\Theta \subseteq \Theta_{\B}$. Here $\calF_t$ is the information available at time $t$ (i.e., the sigma algebra generated by $s_1, a_1, \cdots, s_{t-1}, a_{t-1}, s_t$). Upon observing state $s_t'$ by taking action $a_t$ at state $s_t$, the posterior can be updated according to 
\begin{align}
\label{eq: update rule}
\mu_{t+1}(d\theta) = \frac{\theta(s_t'|s_t, a_t)\mu_t(d\theta)}{\int \theta'(s_t'|s_t, a_t)\mu_t(d\theta')}.
\end{align}

The \ssp~algorithm  proceeds in epochs $\ell = 1, 2, 3, \cdots$. Let $t_\ell$ denote the start time of epoch $\ell$. In the beginning of epoch $\ell$, parameter $\theta_\ell$ is sampled from the posterior distribution $\mu_{t_\ell}$ and the actions within that epoch are chosen according to the optimal policy with respect to $\theta_\ell$. Each epoch ends if either of the two stopping criteria are satisfied. The first criterion is triggered if the number of visits to the goal state during the current epoch (denoted by $K_\ell$) exceeds that of the previous epoch. This ensures that $K_\ell \leq K_{\ell-1}+1$ for all $\ell$. The second criterion is triggered if the number of visits to  any of the state-action pairs is doubled compared to the beginning of the epoch. This guarantees that $n_t(s, a) \leq 2n_{t_\ell}(s, a)$ for all $(s, a)$ where $n_t(s, a) = \sum_{\tau=1}^{t-1} \one_{\{s_\tau=s, a_\tau=a\}}$ denotes the number of visits to state-action pair $(s, a)$ before time $t$. 

The second stopping criterion is similar to that used by \citet{jaksch2010near,rosenberg2020near}, and is one of the two stopping criteria used in the posterior sampling algorithm (\texttt{TSDE}) for the infinite-horizon average-cost MDPs \citep{ouyang2017learning}. This stopping criterion is crucial since it allows the algorithm to switch policies if the generated policy is improper and cannot reach the goal. We note that updating the policy only at the beginning of an episode (as done in the posterior sampling for finite-horizon MDPs \citep{osband2013more}) does not work for SSP models, because if the generated policy in the beginning of the episode is improper, the goal is never reached and the regret is infinity. 

The first stopping criterion is novel. A similar stopping criterion used in the posterior sampling for infinite-horizon MDPs \citep{ouyang2017learning} is based on the length of the epochs, i.e., a new epoch starts if the length of the current epoch exceeds the length of the previous epoch. This leads to a bound of $\order(\sqrt{T_K})$ on the number of epochs which translates to a final regret bound of $\order(K^{2/3})$ in SSP models. However, our first stopping criterion allows us to bound the number of epochs by $\order(\sqrt K)$ rather than $\order(\sqrt{T_K})$ (see Lemma~\ref{lem: number of epochs}). This is one of the key steps in avoiding dependency on $\cmin^{-1}$ (i.e., a lower bound on the cost function) in the main term of the regret and achieve a final regret bound of $\order(\sqrt{K})$.
\begin{remark}
The \ssp~algorithm  only requires the knowledge of the prior distribution $\mu_1$. It does not require the knowledge of $\B$ and $\T$ (an upper bound on the expected time the optimal policy takes to reach the goal) as in \citet{cohen2021minimax}.
\end{remark}

%\begin{algorithm}[h]
%\caption{\textsc{\texttt{SSP-PSRL}}}
%\label{alg: posterior sampling}
%\textbf{Input: } $\mu_1$\\
%\textbf{Initialization: }$t \gets 1, t_0 \gets 0, \ell \gets 0, K_{-1} \gets 0$\\
%\For{ {\normalfont episodes} $k=1, 2, \cdots, K$}{
%	$s_t \gets \sinit$ \\
%	\While{$s_t \neq g$}{
%		\If{$k > k(t_{\ell}) + K_{\ell-1}$ or $n_{t}(s, a) > 2 n_{t_\ell}(s, a)$ for some $(s, a) \in \calS \times \calA$}{
%			$\ell \gets \ell + 1$ \\			
%			$K_{\ell-1} \gets k - k(t_{\ell})$\\
%			$t_\ell \gets t$\\
%			Generate $\theta_\ell \sim \mu_{t_\ell}(\cdot)$ and compute $\pi_\ell(\cdot) = \pi^*(\cdot;\theta_\ell)$ according to \eqref{eq: Bellman equation} \\	
%		}
%		Choose action $a_t = \pi_\ell(s_t)$ and observe $s_{t+1} \sim \theta_*(\cdot | s_t, a_t)$\\
%		Update $\mu_{t+1}$ according to \eqref{eq: update rule}\\
%		$t \gets t+1$	
%	}
%}
%\end{algorithm}

\begin{algorithm}
\caption{\textsc{\ssp}}
\label{alg: posterior sampling}
\textbf{Input: } $\mu_1$\\
\textbf{Initialization: }$t \gets 1, \ell \gets 0, K_{-1} \gets 0, t_0 \gets 0, k_{t_0} \gets 0$\\
\For{ {\normalfont episodes} $k=1, 2, \cdots, K$}{
	$s_t \gets \sinit$ \\
	\While{$s_t \neq g$}{
		\If{$k - k_{t_\ell} > K_{\ell-1}$ or $n_{t}(s, a) > 2 n_{t_\ell}(s, a)$ for some $(s, a) \in \calS \times \calA$}{
			$K_{\ell} \gets k - k_{t_\ell}$\\
			$\ell \gets \ell + 1$ \\
			$t_\ell \gets t$\\			
			$k_{t_\ell} \gets k$\\	
			Generate $\theta_\ell \sim \mu_{t_\ell}(\cdot)$ and compute $\pi_\ell(\cdot) = \pi^*(\cdot;\theta_\ell)$ according to \eqref{eq: Bellman equation} \\	
		}
		Choose action $a_t = \pi_\ell(s_t)$ and observe $s_t' \sim \theta_*(\cdot | s_t, a_t)$ \\
		Update $\mu_{t+1}$ according to \eqref{eq: update rule}\\
		$s_{t+1} \gets s_t'$\\
		$t \gets t+1$ \\
	}
}
\end{algorithm}

\paragraph*{Main Results.} We now provide our main results for the \ssp ~algorithm for unknown SSP models. Our first result considers the case where the cost function is strictly positive for all state-action pairs. Subsequently, we extend the result to the general case by adding a small positive perturbation to the cost function and running the algorithm with the perturbed costs. We first assume that
\begin{assumption}
	\label{ass: cmin}
	There exists $\cmin > 0$, such that $c(s, a) \geq \cmin$ for all state-action pairs $(s, a)$.
\end{assumption}
This assumption allows us to bound the total time spent in $K$ episodes with the total cost, i.e., $\cmin T_K \leq C_K$, where $C_K := \sum_{t=1}^{T_K} c(s_t, a_t)$ is the total cost during the $K$ episodes. To facilitate the presentation of the results, we assume that $S \geq 2$, $A \geq 2$, and $K \geq S^2A$. The first main result is as follows.
\begin{theorem}
	\label{thm1}
	Suppose Assumptions~\ref{ass: class of ssp} and ~\ref{ass: cmin} hold. Then, the regret of the \ssp~algorithm is upper bounded as
	\begin{align*}
		R_K = \order\rbr{\B S \sqrt{KA}L^2 + S^2A \sqrt{\frac{{\B}^3}{\cmin}}L^2},
	\end{align*}
	where $L = \log (\B SAK\cmininv)$.
\end{theorem}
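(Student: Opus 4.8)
The plan is to decompose the Bayesian regret $R_K$ over epochs and then over three main error terms, following the standard posterior-sampling template of \cite{ouyang2017learning} but with a Bernstein-type refinement in the spirit of \cite{rosenberg2020near}. The starting point is the key property of posterior sampling: conditioned on $\calF_{t_\ell}$, the sampled parameter $\theta_\ell$ and the true parameter $\theta_*$ have the same distribution, so $\E[V(\sinit;\theta_\ell)] = \E[V(\sinit;\theta_*)]$ whenever $t_\ell$ is a stopping time. Using this together with the Bellman optimality equation \eqref{eq: Bellman equation} for $\theta_\ell$ (and the fact that $\pi_\ell$ is optimal and proper for $\theta_\ell$), I would write the per-step cost $c(s_t,a_t)$ as $V(s_t;\theta_\ell) - \sum_{s'}\theta_\ell(s'|s_t,a_t)V(s';\theta_\ell)$, and telescope within each epoch. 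This produces three groups of terms after summing over $t\le T_K$: (i) a telescoping/``regret-from-switching'' term that counts how many times the algorithm changes policy, governed by the number of epochs $L_M$; (ii) a term measuring the discrepancy $(\theta_\ell - \theta_*)(\cdot|s_t,a_t)^\top V(\cdot;\theta_\ell)$ between the sampled and true dynamics; and (iii) a martingale term $\sum_t \big(\sum_{s'}\theta_*(s'|s_t,a_t)V(s';\theta_\ell) - V(s_{t+1}';\theta_\ell)\big)$ with zero conditional mean.

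Next I would bound each piece. For the martingale term (iii), an Azuma--Hoeffding bound gives $\otil(\B\sqrt{T_K})$, and then Assumption~\ref{ass: cmin} lets me convert $\sqrt{T_K}$ into $\sqrt{C_K/\cmin}$; since $C_K$ will ultimately be shown to be $\otil(\B K + \text{lower order})$ in expectation, this contributes to both the main $\otil(\B S\sqrt{AK})$ term and the lower-order $\cmin$-dependent term. For the number-of-epochs term (i), I would invoke Lemma~\ref{lem: number of epochs} (the claim that the first stopping criterion forces $K_\ell \le K_{\ell-1}+1$, hence the ``goal-count'' criterion alone triggers $\order(\sqrt K)$ epochs, and the doubling criterion triggers $\order(SA\log T_K)$ epochs), so the total number of epochs is $\otil(\sqrt{K} + SA)$; each switch costs at most $\B$, giving $\otil(\B\sqrt K + \B SA)$. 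The heart of the argument is term (ii): I would use the empirical Bernstein concentration of $\theta_*$ around its empirical estimate $\hat\theta_\ell$ on the confidence set (valid with high probability), triangle-inequality through $\hat\theta_\ell$ to also control $\theta_\ell$ (here again posterior sampling plus a union-type argument over the confidence set), and bound $\sum_t \|(\theta_\ell-\theta_*)(\cdot|s_t,a_t)\|_1$-type quantities weighted by $\spn(V(\cdot;\theta_\ell))\le \B$. The naive Hoeffding version of this step yields $\otil(\B S\sqrt{A T_K}) = \otil(K^{2/3})$ after the $T_K\le C_K/\cmin$ conversion; the Bernstein version instead produces a variance term $\sum_t \V_{\theta_*}(V(\cdot;\theta_\ell)|s_t,a_t)$ which, via a law-of-total-variance / Bellman-error argument (this is precisely Lemma~\ref{lem: r3}), is bounded by $\otil(\B C_K + \B^2 \cdot(\text{number of epochs}))$ rather than $\otil(\B^2 T_K)$, killing the $\cmin^{-1}$ dependence in the leading order.

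The last step is to close the loop on $C_K$: the inequality $R_K = \E[C_K] - K V(\sinit;\theta_*)$ together with $K V(\sinit;\theta_*) \le \B K$ turns every appearance of $\sqrt{C_K}$ into $\sqrt{\E[C_K]}$ (by Jensen/Cauchy--Schwarz on the Bayesian expectation), and then one solves the resulting self-bounding inequality $R_K \lesssim \B S\sqrt{A\,\E[C_K]}\,L^2 + (\text{l.o.t.})$ with $\E[C_K] \le R_K + \B K$ to get $R_K = \otil(\B S\sqrt{AK}L^2 + S^2A\sqrt{\B^3/\cmin}\,L^2)$, after absorbing the various $S$, $A$, $\log$, and $\cmin^{-1/2}$ factors from the lower-order contributions of terms (i)–(iii). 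I expect the main obstacle to be step (ii)/Lemma~\ref{lem: r3}: making the Bernstein concentration hold simultaneously for the \emph{sampled} model $\theta_\ell$ (not just the true one) on the high-probability confidence set, and then correctly applying the law of total variance across epoch boundaries where the reference value function $V(\cdot;\theta_\ell)$ changes — this is where the interaction between the two stopping criteria and the SSP-specific (possibly unbounded, episode-dependent) horizon makes the bookkeeping delicate, and where a careless bound reintroduces $T_K$ (hence $\cmin^{-1}$) into the dominant term.
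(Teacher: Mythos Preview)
Your high-level decomposition (Bellman equation, telescoping, posterior-sampling swap, Bernstein for the model-error term) matches the paper, but two concrete gaps would prevent the argument from closing.

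First, a small point: your term (iii) is a martingale difference whose \emph{Bayesian} expectation is exactly zero (by the tower property, conditioning on $\calF_t,\theta_*,\theta_\ell$); applying Azuma here is unnecessary, and if you did insist on a $\B\sqrt{T_K}=\B\sqrt{C_K/\cmin}$ bound you would put $\cmin^{-1/2}$ straight into the leading term, which is precisely what the theorem avoids. The paper simply absorbs (iii) into the model-error term by conditioning, leaving only $\E\big[\sum_t\sum_{s'}(\theta_*-\theta_\ell)(s'|s_t,a_t)V(s';\theta_\ell)\big]$ to control.

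Second, and more seriously, your variance step is where the proof would break. You assert that the Bernstein route gives $\sum_t \fV_{\theta_*}\big(V(\cdot;\theta_\ell)\mid s_t,a_t\big)\lesssim \B C_K + \B^2 L_M$ ``via a law-of-total-variance/Bellman-error argument,'' but this is not what Lemma~\ref{lem: r3} proves and it is not clear how to obtain it directly. The difficulty is that within an epoch the sampled policy $\pi_\ell$ may be far from proper under $\theta_*$, so the epoch cost $C_\ell$ can be huge and the martingale-square identity yields $\E[\sum_{t\in\text{epoch}}\fV]\approx\E[(\text{telescoping}+C_\ell+\text{model error})^2]$, with the $C_\ell^2$ and model-error cross terms uncontrolled. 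The paper's fix is an additional analytical layer you do not mention: it partitions time into \emph{intervals} (ending when accumulated cost reaches $\B$, an \emph{unknown} state-action pair is visited, the goal is hit, or the epoch ends) and distinguishes \emph{known} $(s,a)$ pairs (those visited $\gtrsim \B S\cmin^{-1}\log(\cdot)$ times) from unknown ones. Within an interval all but the first pair are known, so $|\theta_\ell-\theta_*|$ is tiny there; this makes the model-error contribution to $|\sum Z_t|$ a small fraction of $\sqrt{\sum\fV}$, and together with the $\le 2\B$ cost cap per interval yields a self-bounding inequality giving $\E[\sum_{t\in m}\fV]=\order(\B^2)$ per interval. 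The resulting bound is $R_M^3\lesssim \B S\sqrt{MA}$ in terms of the number of intervals $M$, and then the self-bounding you describe is done through $M\le C_M/\B + K + L_M + \order(\B S^2A\cmin^{-1}\log(\cdot))$ rather than directly through $C_K$. Without the interval/known--unknown machinery, I do not see how to keep $\cmin^{-1}$ out of the dominant term.
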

Note that when $K \gg \B S^2A\cmininv$, the regret bound scales as $\otil(\B S \sqrt{KA})$. A crucial point about the above result is that the dependency on $\cmininv$ is only in the lower order term. This allows us to extend the $\order(\sqrt{K})$ bound to the general case where Assumption~\ref{ass: cmin} does not hold by using the perturbation technique of \cite{rosenberg2020near} (see Theorem~\ref{thm2}). Avoiding dependency on $\cmininv$ in the main term is achieved by using a Bernstein-type confidence set in the analysis inspired by \cite{rosenberg2020near}. We note that using a Hoeffding-type confidence set in the analysis as in \citet{ouyang2017learning} gives a regret bound of $\order(\sqrt{K/\cmin})$ which results in $\order(K^{2/3})$ regret bound if Assumption~\ref{ass: cmin} is violated.
\begin{theorem}
	\label{thm2}
	Suppose Assumption~\ref{ass: class of ssp} holds. Running the \ssp~algorithm with costs $c_\epsilon(s, a) := \max \{c(s, a), \epsilon\}$ for $\epsilon = (S^2A/K)^{2/3}$ yields
	\begin{align*}
		R_K = \order\rbr{\B S \sqrt{KA}\tilde{L}^2 + (S^2A)^\frac{2}{3}K^\frac{1}{3}(\B^\frac{3}{2}\tilde{L}^2 + \T) + S^2A\T^\frac{3}{2}\tilde{L}^2},
	\end{align*}
	where $\tilde L := \log (K\B\T SA)$. %and $\T$ is an upper bound on the expected time the optimal policy takes to reach the goal from any initial state.
\end{theorem}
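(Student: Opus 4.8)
The plan is to reduce the general case to Theorem~\ref{thm1} via the perturbation trick of \cite{rosenberg2020near}. Running \ssp{} with the perturbed costs $c_\epsilon(s,a) = \max\{c(s,a),\epsilon\}$ means we are in an SSP instance that satisfies Assumption~\ref{ass: cmin} with $\cmin = \epsilon$. The first step is to control how the perturbation changes the relevant quantities. Since $c_\epsilon \ge c$ pointwise, for any policy the perturbed value function dominates the true one; conversely, for the optimal policy $\optpi$ of the \emph{true} instance, which is proper, we have $V^{\optpi}(s;\theta_*,c_\epsilon) \le V^{\optpi}(s;\theta_*,c) + \epsilon\, T_{\optpi}(s) \le \B + \epsilon\T$, where $\T$ bounds the expected hitting time of the optimal policy. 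Hence the optimal value function of the perturbed instance, call it $\B_\epsilon := \max_s V(s;\theta_*,c_\epsilon)$, satisfies $\B \le \B_\epsilon \le \B + \epsilon\T$, so the perturbed instance lies in $\Theta_{\B_\epsilon}$ with $\B_\epsilon = O(\B + \epsilon\T)$. (Strictly, one argues this holds almost surely under the prior, since the prior is supported on $\Theta_{\B}$ and proper policies have finite expected hitting time; I would state this as a short lemma.)

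The second step is to decompose the true regret into the perturbed regret plus the perturbation error:
\begin{align*}
R_K = \E\sbr{\sumt c(s_t,a_t) - K V(\sinit;\theta_*,c)}
\le \underbrace{\E\sbr{\sumt c_\epsilon(s_t,a_t) - K V(\sinit;\theta_*,c_\epsilon)}}_{=: R_K^\epsilon}
+ K\rbr{V(\sinit;\theta_*,c_\epsilon) - V(\sinit;\theta_*,c)}.
\end{align*}
The last term is at most $K(\B_\epsilon - \B) \le K\epsilon\T$ by the bound above (using $V(\sinit;\theta_*,c_\epsilon) \le V^{\optpi}(\sinit;\theta_*,c_\epsilon) \le \B + \epsilon\T$ and $V(\sinit;\theta_*,c) \ge 0$; more carefully, $V(\sinit;\theta_*,c_\epsilon)-V(\sinit;\theta_*,c) \le \epsilon\,\E[\tau_{\optpi}(\sinit)] \le \epsilon\T$). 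For the perturbed regret $R_K^\epsilon$, I apply Theorem~\ref{thm1} with $\cmin \leftarrow \epsilon$ and $\B \leftarrow \B_\epsilon = O(\B+\epsilon\T)$, giving
\begin{align*}
R_K^\epsilon = O\rbr{\B_\epsilon S\sqrt{KA}\,\tilde L^2 + S^2 A \sqrt{\B_\epsilon^3/\epsilon}\,\tilde L^2},
\end{align*}
where the logarithmic factor $L = \log(\B_\epsilon S A K \epsilon^{-1})$ is absorbed into $\tilde L = \log(K\B\T SA)$ after substituting $\epsilon = (S^2A/K)^{2/3}$ (one checks $\epsilon^{-1} = (K/S^2A)^{2/3} \le K$ and $\B_\epsilon \le \B + \T$, so $L = O(\tilde L)$).

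The third step is just to plug in $\epsilon = (S^2A/K)^{2/3}$ and collect terms. The perturbation term becomes $K\epsilon\T = (S^2A)^{2/3}K^{1/3}\T$, matching one of the stated summands. For the main term of $R_K^\epsilon$: $\B_\epsilon S\sqrt{KA}\,\tilde L^2 = O((\B + \epsilon\T)S\sqrt{KA}\,\tilde L^2)$; the $\B S\sqrt{KA}\tilde L^2$ piece is the leading term, while $\epsilon\T S\sqrt{KA} = (S^2A)^{2/3}K^{1/3}\cdot \T S\sqrt{A}/(S^2A)^{1/6}\cdot \tilde L^2$ — here I need to verify this is dominated by the already-present $(S^2A)^{2/3}K^{1/3}\T\tilde L^2$ term, which holds since $S\sqrt{A} \le (S^2A)^{1/2}\cdot (S^2 A)^{1/6}=(S^2A)^{2/3}$ is false in general, so more care is needed: actually $\epsilon\T S \sqrt{KA}\tilde L^2$ versus $(S^2 A)^{2/3}K^{1/3}\T\tilde L^2$ — dividing, the ratio is $\epsilon S\sqrt{KA}/((S^2A)^{2/3}K^{1/3}) = (S^2A)^{2/3}K^{-2/3}\cdot S\sqrt{A}\cdot K^{1/6}/(S^2A)^{2/3} = S\sqrt{A}K^{-1/2} \le 1$ under $K \ge S^2A$, so indeed it is absorbed. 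For the lower-order term of $R_K^\epsilon$: $S^2A\sqrt{\B_\epsilon^3/\epsilon}\,\tilde L^2 = S^2 A \B_\epsilon^{3/2}(K/S^2A)^{1/3}\tilde L^2 = (S^2A)^{2/3}K^{1/3}\B_\epsilon^{3/2}\tilde L^2$; expanding $\B_\epsilon^{3/2} = O(\B^{3/2} + (\epsilon\T)^{3/2}) = O(\B^{3/2} + \T^{3/2})$ since $\epsilon \le 1$, this splits into $(S^2A)^{2/3}K^{1/3}\B^{3/2}\tilde L^2$ (matching the stated bound) and $(S^2A)^{2/3}K^{1/3}\T^{3/2}\tilde L^2 \le S^2 A\T^{3/2}\tilde L^2$ using $K^{1/3} \le (K/S^2A)^{1/3}(S^2A)^{1/3}$... again I should double check, but $(S^2A)^{2/3}K^{1/3} \le S^2 A$ iff $K \le (S^2A)$ which fails; so this term is actually $(S^2A)^{2/3}K^{1/3}\T^{3/2}\tilde L^2$ and the stated $S^2A\T^{3/2}\tilde L^2$ must be dominating it, i.e., $(S^2A)^{2/3}K^{1/3} \le S^2 A$ iff $K^{1/3} \le (S^2A)^{1/3}$ iff $K \le S^2A$ — this is the boundary regime, and in fact when $K \ge S^2A$ the term $(S^2A)^{2/3}K^{1/3}\T^{3/2}$ is the larger one, so I would instead keep it as is and note it is absorbed into the $(S^2A)^{2/3}K^{1/3}(\B^{3/2}\tilde L^2 + \T)$ summand only when $\T^{1/2}\tilde L^2 = O(1)$, which is false; hence the correct reading is that the stated bound's $S^2A\T^{3/2}\tilde L^2$ term should be $(S^2A)^{2/3}K^{1/3}\T^{3/2}\tilde L^2$ absorbed appropriately, and I will present the cleaner grouping. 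I expect the main obstacle to be precisely this bookkeeping: tracking which of several polynomially-comparable lower-order terms dominates in the regime $K \ge S^2 A$ and matching the exact form stated, together with the almost-sure argument that the perturbed instance stays in a class $\Theta_{\B_\epsilon}$ with $\B_\epsilon = O(\B+\epsilon\T)$ so that Theorem~\ref{thm1} is applicable.
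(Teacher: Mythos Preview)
Your approach is essentially the same as the paper's: decompose $R_K$ into the perturbed regret $R_K^\epsilon$ plus the bias $K\epsilon\T$, apply Theorem~\ref{thm1} to $R_K^\epsilon$ with $\cmin\leftarrow\epsilon$ and $\B\leftarrow \B+\epsilon\T$, then substitute $\epsilon=(S^2A/K)^{2/3}$. The paper carries this out in a few lines without lingering on the $\Theta_{\B_\epsilon}$ issue.

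Your bookkeeping confusion at the end is self-inflicted. When you expand $\B_\epsilon^{3/2}=O(\B^{3/2}+(\epsilon\T)^{3/2})$, you then throw away the $\epsilon^{3/2}$ by writing $(\epsilon\T)^{3/2}\le \T^{3/2}$; this is too crude and produces the term $(S^2A)^{2/3}K^{1/3}\T^{3/2}\tilde L^2$ that you correctly observe is \emph{not} dominated by $S^2A\T^{3/2}\tilde L^2$ when $K\ge S^2A$. Keep the factor: $\epsilon^{3/2}=(S^2A/K)$, so the contribution is $(S^2A)^{2/3}K^{1/3}\cdot (S^2A/K)\,\T^{3/2}\tilde L^2=(S^2A)^{5/3}K^{-2/3}\T^{3/2}\tilde L^2\le S^2A\,\T^{3/2}\tilde L^2$ under $K\ge S^2A$, exactly matching the stated bound. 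Similarly, the cross term $\epsilon\T S\sqrt{KA}\tilde L^2$ is absorbed into $S^2A\T^{3/2}\tilde L^2$ using $K\ge S^2A$ and $\T\ge 1$. The paper also silently uses $\B\le \T$ (since $c\le 1$) in the final simplification, which you may find convenient.
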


Note that when $K \gg S^2A(\B^3 + \T(\T/\B)^6)$, the regret bound scales as $\otil(\B S \sqrt{KA})$. These results have similar regret bounds as the \texttt{Bernstein-SSP} algorithm \citep{rosenberg2020near}, and have a gap of $\sqrt{S}$ with the lower bound of $\Omega(\B\sqrt{SAK})$.
% !TEX root = main21neurips-ssp.tex

\section{Theoretical Analysis}
\label{sec: analysis}
In this section, we prove Theorem~\ref{thm1}. Proof of Theorem~\ref{thm2} can be found in the Appendix. 

A key property of posterior sampling is that conditioned on the information at time $t$, $\theta_*$ and $\theta_t$ have the same distribution if $\theta_t$ is sampled from the posterior distribution at time $t$ \citep{osband2013more,russo2014learning}. Since the \ssp~algorithm~samples $\theta_\ell$ at the stopping time $t_\ell$, we use the stopping time version of the posterior sampling property stated as follows.
\begin{lemma}[Adapted from Lemma 2 of \cite{ouyang2017learning}]
\label{lem: property of ps}
Let $t_\ell$ be a stopping time with respect to the filtration $(\calF_t)_{t=1}^\infty$, and $\theta_\ell$ be the sample drawn from the posterior distribution at time $t_\ell$. Then, for any measurable function $f$ and any $\calF_{t_\ell}$-measurable random variable $X$, we have
\begin{align*}
\E[f(\theta_\ell, X)|\calF_{t_\ell}] = \E[f(\theta_*, X)|\calF_{t_\ell}].
\end{align*}
\end{lemma}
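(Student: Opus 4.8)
The plan is to reduce the stopping-time statement to the elementary, \emph{deterministic}-time version of the posterior-sampling identity and then glue the two together across the countably many values the stopping time can take. The deterministic-time version is essentially a restatement of the definition of the posterior: for every fixed $t$, since $\mu_t$ is by construction the conditional law of $\theta_*$ given $\calF_t$, and since the algorithm's internal sampling randomness is independent of $\theta_*$, a fresh draw $\theta$ from $\mu_t(\cdot)$ satisfies $\E[g(\theta)\mid\calF_t]=\E[g(\theta_*)\mid\calF_t]$ for every bounded measurable $g$; more generally, because any $\calF_t$-measurable random variable $Y$ is frozen under $\E[\cdot\mid\calF_t]$, we get $\E[f(\theta,Y)\mid\calF_t]=\E[f(\theta_*,Y)\mid\calF_t]$. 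This is the only probabilistic input; everything else is bookkeeping with $\sigma$-algebras.

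First I would record two facts about the stopping time $t_\ell$. (i) The events $E_t:=\{t_\ell=t\}$ lie in $\calF_t$, and since the doubling criterion forces a new epoch to begin within finitely many steps (some state–action pair is visited infinitely often by pigeonhole, hence its counter doubles), $t_\ell<\infty$ almost surely; thus $\{E_t\}_{t\ge 1}$ is, up to a null set, a partition of the sample space. (ii) The stopped $\sigma$-algebra satisfies $\calF_{t_\ell}\cap E_t=\calF_t\cap E_t$; consequently, if $X$ is $\calF_{t_\ell}$-measurable then $X\one_{E_t}$ is $\calF_t$-measurable, and on $E_t$ the parameter $\theta_\ell$ equals exactly the parameter $\theta_t$ drawn at the deterministic time $t$ from $\mu_t$.

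Then I would verify the claim by testing against an arbitrary set in $\calF_{t_\ell}$. Fix $f$ (without loss of generality bounded; the general case follows by truncation together with a monotone-class argument) and an $\calF_{t_\ell}$-measurable $X$, and take any $A\in\calF_{t_\ell}$. Decomposing over the partition and using fact (ii),
\[
\E[\one_A\, f(\theta_\ell,X)]=\sum_{t\ge 1}\E\big[\one_{A\cap E_t}\, f(\theta_\ell,X)\big]=\sum_{t\ge 1}\E\big[\one_{A\cap E_t}\, f(\theta_t,\, X\one_{E_t})\big],
\]
where $\theta_t\sim\mu_t(\cdot)$, and I used that on $A\cap E_t\subseteq E_t$ one has $\theta_\ell=\theta_t$ and $X=X\one_{E_t}$. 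Now $A\cap E_t\in\calF_t$ and $X\one_{E_t}$ is $\calF_t$-measurable, so conditioning on $\calF_t$ and applying the deterministic-time identity with the frozen variable $Y=X\one_{E_t}$ turns each summand into $\E\big[\one_{A\cap E_t}\, f(\theta_*,\, X\one_{E_t})\big]=\E\big[\one_{A\cap E_t}\, f(\theta_*,X)\big]$. Summing over $t$ gives $\E[\one_A f(\theta_\ell,X)]=\E[\one_A f(\theta_*,X)]$ for every $A\in\calF_{t_\ell}$, which is precisely $\E[f(\theta_\ell,X)\mid\calF_{t_\ell}]=\E[f(\theta_*,X)\mid\calF_{t_\ell}]$.

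The part that needs the most care — and the only place where there is any real subtlety — is step (ii): justifying $\calF_{t_\ell}\cap E_t=\calF_t\cap E_t$ and the resulting $\calF_t$-measurability of $X\one_{E_t}$, together with checking that the ``freeze the $\calF_t$-measurable argument and apply the sampling identity'' move is legitimate for a jointly measurable $f$ (a standard but slightly fussy measurable-selection / Fubini argument). The finiteness of $t_\ell$, needed only so that $\{E_t\}_{t\ge 1}$ is genuinely a partition, is the one spot where the structure of the algorithm — specifically the doubling stopping criterion — enters the argument.
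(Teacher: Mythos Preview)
The paper does not supply its own proof of this lemma; it simply cites it as ``Adapted from Lemma 2 of \cite{ouyang2017learning}'' and uses the conclusion. Your argument is the standard one underlying that cited result: establish the deterministic-time identity $\E[f(\theta,Y)\mid\calF_t]=\E[f(\theta_*,Y)\mid\calF_t]$ from the definition of the posterior, then decompose over $\{t_\ell=t\}$ and use $\calF_{t_\ell}\cap\{t_\ell=t\}=\calF_t\cap\{t_\ell=t\}$ to transfer it to the stopped $\sigma$-algebra. This is correct and is exactly the route taken in the reference.

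One small remark: the lemma as stated is for an arbitrary stopping time $t_\ell$, so strictly speaking $t_\ell<\infty$ a.s.\ should be a hypothesis rather than something you derive from the doubling criterion. Your pigeonhole argument does establish finiteness for the specific epoch start times used in the algorithm (and that is all the paper needs), but it is extraneous to the general statement. Otherwise the proof is clean.
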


We now sketch the proof of Theorem~\ref{thm1}. Let $0 < \delta < 1$ be a parameter to be chosen later. We distinguish between \textit{known} and \textit{unknown} state-action pairs. A state-action pair $(s, a)$ is \textit{known} if the number of visits to $(s, a)$ is at least $\alpha \cdot \frac{\B S}{\cmin}\log \frac{\B SA}{\delta \cmin}$ for some large enough constant $\alpha$ (to be determined in Lemma~\ref{lem: known state-action}), and \textit{unknown} otherwise. We divide each epoch into \textit{intervals}. The first interval starts at time $t = 1$. Each interval ends if any of the following conditions hold: (i) the total cost during the interval is at least $\B$; (ii) an unknown state-action pair is met; (iii) the goal state is reached; or (iv) the current epoch completes. The idea of introducing intervals is that after all state-action pairs are known, the cost accumulated during an interval is at least $\B$ (ignoring conditions (iii) and (iv)), which allows us to bound the number of intervals with the total cost divided by $\B$. Note that introducing intervals and distinguishing between known and unknown state-action pairs is only in the analysis and thus knowledge of $\B$ is not required.

Instead of bounding $R_K$, we bound $R_M$ defined as
\begin{align*}
R_M &:= \E\sbr{\sum_{t=1}^{T_M} c(s_t, a_t) - KV(\sinit;\theta_*)},
\end{align*}
for any number of intervals $M$ as long as $K$ episodes are not completed. Here, $T_M$ is the total time of the first $M$ intervals. Let $C_M$ denote the total cost of the algorithm after $M$ intervals and define $L_M$ as the number of epochs in the first $M$ intervals. Observe that the number of times conditions (i), (ii), (iii), and (iv) trigger to start a new interval are bounded by $C_{M}/{\B}$, $\order(\frac{\B S^2A}{\cmin}\log \frac{\B SA}{\delta \cmin})$, $K$, and $L_M$, respectively. Therefore, number of intervals can be bounded as
\begin{align}
\label{eq: bound on m}
M \leq \frac{C_{M}}{\B} + K + L_M + \order(\frac{\B S^2A}{\cmin}\log \frac{\B SA}{\delta \cmin}).
\end{align}
Moreover, since the cost function is lower bounded by $\cmin$, we have $\cmin T_M \leq C_M$. Our argument proceeds as follows.\footnote{Lower order terms are neglected.} We bound $R_M \lesssim \B S\sqrt{MA}$ which implies $\E[C_M] \lesssim K\E[V(\sinit;\theta_*)] + \B S\sqrt{MA}$. From the definition of intervals and once all the state-action pairs are known, the cost accumulated within each interval is at least $\B$ (ignoring intervals that end when the epoch or episode ends). This allows us to bound the number of intervals $M$ with $C_M/\B$ (or $\E[C_M]/\B$). Solving for $\E[C_M]$ in the quadratic inequality $\E[C_M] \lesssim K\E[V(\sinit;\theta_*)] + \B S\sqrt{MA} \lesssim K\E[V(\sinit;\theta_*)] + S\sqrt{\E[C_M]\B A}$ implies that $\E[C_M] \lesssim K\E[V(\sinit;\theta_*)] + \B S\sqrt{AK}$. Since this bound holds for any number of $M$ intervals as long as $K$ episodes are not passed, it holds for $\E[C_K]$ as well. Moreover, since $\cmin > 0$, this implies that the $K$ episodes eventually terminate and proves the final regret bound.

\textbf{Bounding the Number of Epochs.} Before proceeding with bounding $R_M$, we first prove that the number of epochs is bounded as $\order(\sqrt{KSA\log T_M})$. Recall that the length of the epochs is determined by two stopping criteria. If we ignore the second criterion for a moment, the first stopping criterion ensures that the number of episodes within each epoch grows at a linear rate which implies that the number of epochs is bounded by $\order(\sqrt{K})$. If we ignore the first stopping criterion for a moment, the second stopping criterion triggers at most $\order(SA\log T_M)$ times. The following lemma shows that the number of epochs remains of the same order even if  these two criteria are considered simultaneously.
\begin{lemma}
\label{lem: number of epochs}
The number of epochs is bounded as $L_M \leq  \sqrt{2SAK\log T_M} + SA\log T_M$.
\end{lemma}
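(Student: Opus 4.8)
The plan is to let $L_M$ denote the number of epochs among the first $M$ intervals and split the epochs into two groups according to which stopping criterion triggered them: let $\mathcal{A}$ be the set of epochs that ended because the first criterion fired (number of episodes in the epoch exceeded that of the previous epoch) and $\mathcal{B}$ be the set of epochs that ended because the second criterion fired (some state-action count doubled). Then $L_M \le |\mathcal{A}| + |\mathcal{B}| + 1$ (the $+1$ for the final, possibly incomplete epoch). The goal is to bound $|\mathcal{A}|$ by roughly $\sqrt{SAK}$-type quantities and $|\mathcal{B}|$ by $SA\log T_M$, and then combine.

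First I would handle $|\mathcal{B}|$: the doubling criterion for a fixed pair $(s,a)$ can trigger at most $\log_2 n_{T_M}(s,a) \le \log_2 T_M$ times over the horizon, since each trigger at least doubles the visit count (starting from $1$), and $n_{T_M}(s,a) \le T_M$. Summing over all $SA$ pairs gives that the second criterion fires at most $SA\log T_M$ times (using $\log$ base at least $2$ so $\log_2 x \le \log x$ up to constants, or just keeping the $\log$). So $|\mathcal{B}| \le SA\log T_M$.

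Next, the heart of the argument is bounding $|\mathcal{A}|$. Let the epochs be indexed $\ell = 1, 2, \dots, L_M$ and recall $K_\ell = k - k_{t_\ell}$ is the number of episodes completed in epoch $\ell$. The first criterion guarantees $K_\ell \le K_{\ell-1} + 1$ for every $\ell$ (because the epoch ends as soon as the episode count strictly exceeds the previous epoch's), and moreover every epoch in $\mathcal{A}$ must have $K_\ell = K_{\ell-1}+1$ exactly. Set $m$ to be the number of epochs NOT in $\mathcal{A}$, i.e.\ $m = |\mathcal{B}|$ (ignoring the last epoch); in those epochs $K_\ell$ can only stay the same or decrease by the induction, while in the $|\mathcal{A}|$ epochs it increases by exactly $1$. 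A standard telescoping/counting argument (this is exactly the device in Lemma 1 of \citet{ouyang2017learning}) then shows that if there are $L$ epochs total with $m$ of them "bad", the episode counts satisfy $\sum_{\ell=1}^{L} K_\ell \ge \frac{1}{2}(L-m)^2$ roughly — more precisely, among the first $j$ epochs the cumulative episode count is at least on the order of (number of $\mathcal{A}$-epochs so far)$^2/2$. Since the total number of episodes is $\sum_\ell K_\ell \le K$, this forces $(|\mathcal{A}|)^2 \lesssim 2K + (\text{corrections from the }|\mathcal{B}|\text{ epochs})$, i.e.\ $|\mathcal{A}| \le \sqrt{2K + 2|\mathcal{B}| K}$-ish, which after absorbing gives $L_M \le \sqrt{2SAK\log T_M} + SA\log T_M$ once we plug in $|\mathcal{B}| \le SA\log T_M$ and collect terms with the inequality $\sqrt{a+b}\le\sqrt a+\sqrt b$ and $\sqrt{xy}\le (x+y)/2$ type manipulations.

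I expect the main obstacle to be making the telescoping bookkeeping precise: one has to track how $K_\ell$ evolves across the interleaving of $\mathcal{A}$-epochs and $\mathcal{B}$-epochs, carefully using that the first criterion is "strict exceed" (giving $K_\ell \le K_{\ell-1}+1$ and equality on $\mathcal{A}$) while a $\mathcal{B}$-epoch may cut $K_\ell$ below $K_{\ell-1}$, so the sequence $K_1, K_2, \dots$ is not monotone and the $\frac12(\#\mathcal{A})^2$ lower bound on $\sum K_\ell$ needs the right grouping argument. The cleanest route is to mimic the proof of Lemma 1 in \citet{ouyang2017learning} verbatim, substituting "number of episodes in epoch $\ell$" for their "length of epoch $\ell$" and noting that all their inequalities go through with $T_M \to K$ for the first criterion and $T_M$ retained for the second; the rest is the arithmetic of combining $\sqrt{2SAK\log T_M}$ and $SA\log T_M$, which is routine.
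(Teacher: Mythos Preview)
Your overall strategy matches the paper's: bound the number of second-criterion triggers by $SA\log T_M$ via the doubling argument, then use the structure of the first criterion to control the remaining epochs. However, the intermediate inequality you write down, $\sum_{\ell} K_\ell \ge \tfrac{1}{2}|\mathcal A|^2$ (equivalently, $\sum_\ell K_\ell \ge \tfrac12(L-m)^2$), is \emph{false} as stated and cannot be used directly. A counterexample: let $\mathcal A$- and $\mathcal B$-epochs alternate with $K_\ell = 1,0,1,0,\ldots$; after $2n$ epochs you have $|\mathcal A|=n$ but $\sum_\ell K_\ell = n$, not $n^2/2$. The point is that each $\mathcal B$-epoch can reset $K_\ell$ to zero, so the $\mathcal A$-epochs in different ``runs'' do not accumulate jointly.

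The paper resolves this with exactly the grouping you allude to, made explicit as \emph{macro epochs}: maximal runs of epochs between consecutive second-criterion triggers. Within macro epoch $i$ (containing $u_{i+1}-u_i$ epochs), the first criterion forces $K_\ell = K_{\ell-1}+1$ for all but the last epoch, giving $\tilde K_i := \sum_{\ell=u_i}^{u_{i+1}-1} K_\ell \ge \tfrac12(u_{i+1}-u_i-1)(u_{i+1}-u_i)$ and hence $u_{i+1}-u_i \le 1+\sqrt{2\tilde K_i}$. Summing over the $N_M$ macro epochs and applying Cauchy--Schwarz yields $L_M = \sum_i (u_{i+1}-u_i) \le N_M + \sqrt{2N_M K}$, and the doubling bound $N_M \le SA\log T_M$ finishes. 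So your target bound $|\mathcal A| \le \sqrt{2(|\mathcal B|+1)K}$ is correct, but it comes from a per-macro-epoch bound followed by Cauchy--Schwarz \emph{across} macro epochs, not from a single global lower bound on $\sum_\ell K_\ell$. This is also precisely what Lemma~1 of \citet{ouyang2017learning} does, so your plan to mimic that proof is right---just be aware that the inequality you wrote as the ``standard telescoping/counting argument'' is not the one that actually holds.
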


We now provide the proof sketch for bounding $R_M$. With abuse of notation define $t_{L_M+1} := T_M+1$. We can write
\begin{align}
R_M &:= \E\sbr{\sum_{t=1}^{T_M} c(s_t, a_t) - KV(\sinit;\theta_*)} = \E\sbr{\sum_{\ell = 1}^{L_M}\sum_{t=t_\ell}^{t_{\ell+1}-1} c(s_t, a_t)} - K\E\sbr{V(\sinit;\theta_*)}.
\end{align}
Note that within epoch $\ell$, action $a_t$ is taken according to the optimal policy with respect to $\theta_\ell$. Thus, with the Bellman equation we can write
\begin{align*}
c(s_t, a_t) = V(s_t;\theta_\ell) - \sum_{s'}\theta_\ell(s'|s_t, a_t)V(s';\theta_\ell).
\end{align*}
Substituting this and adding and subtracting $V(s_{t+1};\theta_\ell)$ and $V(s'_t;\theta_\ell)$, decomposes $R_M$ as
\begin{align*}
R_M = R_M^1 + R_M^2 + R_M^3,
\end{align*}
where
\begin{align*}
R_M^1 &:= \E\sbr{\sum_{\ell = 1}^{L_M}\sum_{t=t_\ell}^{t_{\ell+1}-1}\sbr{V(s_t;\theta_\ell) - V(s_{t+1};\theta_\ell)}} \\
R_M^2 &:= \E\sbr{\sum_{\ell = 1}^{L_M}\sum_{t=t_\ell}^{t_{\ell+1}-1}\sbr{V(s_{t+1};\theta_\ell) - V(s'_t;\theta_\ell)}} - K\E\sbr{V(\sinit;\theta_*)}  \\
R_M^3 &:= \E\sbr{\sum_{\ell = 1}^{L_M}\sum_{t=t_\ell}^{t_{\ell+1}-1}\sbr{V(s'_t;\theta_\ell) - \sum_{s'}\theta_\ell(s'|s_t, a_t)V(s';\theta_\ell)}}.
\end{align*}
We proceed by bounding these terms separately. Proof of these lemmas can be found in the supplementary material. $R_M^1$ is a telescopic sum and can be bounded by the following lemma.
\begin{lemma}
\label{lem: bounding R1}
The first term $R_M^1$ is bounded as $R_M^1 \leq \B \E[L_M]$.
\end{lemma}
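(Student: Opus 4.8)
The plan is to exploit the telescoping structure of $R_M^1$ within each epoch $\ell$, bounding each epoch's contribution by a single value-function evaluation at the epoch's start state and then using the uniform bound $\max_s V(s;\theta_\ell) \le \B$, which holds because $\theta_\ell$ is supported on $\Theta_{\B}$ by Assumption~\ref{ass: class of ssp}(3) (the prior, and hence the posterior, is supported on $\Theta_{\B}$). Concretely, for a fixed epoch $\ell$ the inner sum $\sum_{t=t_\ell}^{t_{\ell+1}-1}\sbr{V(s_t;\theta_\ell) - V(s_{t+1};\theta_\ell)}$ telescopes to $V(s_{t_\ell};\theta_\ell) - V(s_{t_{\ell+1}};\theta_\ell)$. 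Since $V(\cdot;\theta_\ell)\ge 0$ everywhere (costs are nonnegative), the $-V(s_{t_{\ell+1}};\theta_\ell)$ term is at most $0$, so each epoch contributes at most $V(s_{t_\ell};\theta_\ell) \le \B$.

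The one subtlety I would be careful about is the meaning of $s_{t_{\ell+1}}$, i.e. the state at the boundary between epochs. When an epoch ends because the goal is reached, the "next state" is $g$ and $V(g;\theta_\ell) = 0$, so the telescoping bound still gives $V(s_{t_\ell};\theta_\ell) - 0 \le \B$; when an epoch ends due to one of the other stopping criteria, $s_{t_{\ell+1}}$ is an ordinary state in $\calS$ and $V(s_{t_{\ell+1}};\theta_\ell)\ge 0$ still suffices. With the convention $t_{L_M+1} := T_M+1$ already in place, the same reasoning handles the final epoch. Summing over $\ell = 1,\dots,L_M$ then gives
\begin{align*}
\sum_{\ell=1}^{L_M}\sum_{t=t_\ell}^{t_{\ell+1}-1}\sbr{V(s_t;\theta_\ell) - V(s_{t+1};\theta_\ell)} \;\le\; \sum_{\ell=1}^{L_M} \B \;=\; \B L_M,
\end{align*}
and taking expectations yields $R_M^1 \le \B\,\E[L_M]$.

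There is no serious obstacle here; the argument is essentially bookkeeping. The only point requiring a sentence of justification is that $\theta_\ell \in \Theta_{\B}$ almost surely, so that the optimal value function for the sampled model is bounded by $\B$ uniformly over states — this follows since $\mathrm{supp}(\mu_1)\subseteq\Theta_{\B}$ and the posterior update \eqref{eq: update rule} preserves the support. I would state this, invoke Assumption~\ref{ass: class of ssp}, then present the per-epoch telescoping and sum. If anything is "hard," it is merely being careful that the expectation $\E[L_M]$ is well-defined and finite, which is guaranteed by Lemma~\ref{lem: number of epochs}.
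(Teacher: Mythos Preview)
Your proposal is correct and follows essentially the same approach as the paper: telescope the inner sum to $V(s_{t_\ell};\theta_\ell) - V(s_{t_{\ell+1}};\theta_\ell)$, then bound each term by $\B$ using Assumption~\ref{ass: class of ssp} and nonnegativity of $V$. The extra care you take in justifying $\theta_\ell \in \Theta_{\B}$ and handling the epoch boundaries is fine but more detailed than the paper's own two-line proof.
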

To bound $R_M^2$, recall that $s'_t \in \calS^+$ is the next state of the environment after applying action $a_t$ at state $s_t$, and that $s'_t = s_{t+1}$ for all time steps except the last time step of an episode (right before reaching the goal). In the last time step of an episode, $s'_t = g$ while $s_{t+1} = \sinit$. This proves that the inner sum of $R_M^2$ can be written as $V(\sinit;\theta_\ell)K_\ell$, where $K_\ell$ is the number of visits to the goal state during epoch $\ell$. Using $K_\ell \leq K_{\ell-1}+1$ and the property of posterior sampling completes the proof. This is formally stated in the following lemma.
\begin{lemma}
\label{lem: r2}
The second term $R_M^2$ is bounded as $R_M^2 \leq \B \E[L_M]$.
\end{lemma}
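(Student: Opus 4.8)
The plan is to first simplify the inner double sum by exploiting the relationship between $s'_t$ and $s_{t+1}$. Recall that within an epoch, $s'_t = s_{t+1}$ at every time step except the last step of an episode, where $s'_t = g$ (so $V(g;\theta_\ell) = 0$) and $s_{t+1} = \sinit$. Hence each nonzero contribution to $\sum_{t=t_\ell}^{t_{\ell+1}-1}[V(s_{t+1};\theta_\ell) - V(s'_t;\theta_\ell)]$ comes exactly from the last step of an episode that falls inside epoch $\ell$, and it equals $V(\sinit;\theta_\ell) - V(g;\theta_\ell) = V(\sinit;\theta_\ell)$. Since there are exactly $K_\ell$ episode-completions inside epoch $\ell$ (by the bookkeeping of the algorithm, $K_\ell = k - k_{t_\ell}$), this inner sum is precisely $K_\ell\, V(\sinit;\theta_\ell)$. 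Therefore
\begin{align*}
R_M^2 = \E\sbr{\sum_{\ell=1}^{L_M} K_\ell\, V(\sinit;\theta_\ell)} - K\,\E[V(\sinit;\theta_*)].
\end{align*}

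Next I would handle the term $\E[\sum_\ell K_\ell V(\sinit;\theta_\ell)]$ using the first stopping criterion and the posterior sampling property. The first criterion guarantees $K_\ell \leq K_{\ell-1} + 1$. One must be slightly careful because $K_\ell$ is determined at the \emph{end} of epoch $\ell$, whereas $\theta_\ell$ is drawn at the \emph{start}; however $K_{\ell-1}$ is $\calF_{t_\ell}$-measurable, so I can write $K_\ell V(\sinit;\theta_\ell) \leq (K_{\ell-1}+1) V(\sinit;\theta_\ell)$ and then apply Lemma~\ref{lem: property of ps} with the $\calF_{t_\ell}$-measurable random variable $X = K_{\ell-1}+1$ and $f(\theta, X) = X\cdot V(\sinit;\theta)$, obtaining
\begin{align*}
\E[K_\ell V(\sinit;\theta_\ell) \mid \calF_{t_\ell}] \leq \E[(K_{\ell-1}+1) V(\sinit;\theta_*) \mid \calF_{t_\ell}].
\end{align*}
Summing over $\ell$ and using the tower property gives $\E[\sum_{\ell=1}^{L_M} K_\ell V(\sinit;\theta_\ell)] \leq \E[\sum_{\ell=1}^{L_M}(K_{\ell-1}+1) V(\sinit;\theta_*)]$. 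Now $\sum_{\ell=1}^{L_M} K_{\ell-1} \leq \sum_{\ell=1}^{L_M} K_\ell \le K$ (the total number of episode-completions across all epochs is at most $K$; here one should double-check the indexing convention for $K_0$ and the last partial epoch, but the bound $\sum_{\ell=1}^{L_M} K_{\ell-1} \leq K$ holds since these are disjoint counts of episode completions shifted by one), and $\sum_{\ell=1}^{L_M} 1 = L_M$. Combining,
\begin{align*}
\E\sbr{\sum_{\ell=1}^{L_M} K_\ell V(\sinit;\theta_\ell)} \leq K\,\E[V(\sinit;\theta_*)] + \E[L_M]\cdot \max_s V(s;\theta_*) \leq K\,\E[V(\sinit;\theta_*)] + \B\,\E[L_M],
\end{align*}
where the last step uses Assumption~\ref{ass: class of ssp}(3). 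Subtracting $K\E[V(\sinit;\theta_*)]$ yields $R_M^2 \leq \B\E[L_M]$.

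The main obstacle I anticipate is the careful bookkeeping around the indexing of $K_\ell$ and the coupling of the $\calF_{t_\ell}$-measurability of $K_{\ell-1}$ with the posterior sampling lemma: one needs $L_M$ itself to be handled correctly (it is a random stopping index, not a fixed constant), and one must verify that the telescoping identity $\sum_{\ell} K_{\ell-1} \le K$ genuinely holds given that the last epoch among the first $M$ intervals may be incomplete and that $K_{-1} = K_0 = 0$ by initialization. Once the measurability structure is pinned down, each individual step is routine; the substantive content is entirely in (a) the structural identity reducing the inner sum to $K_\ell V(\sinit;\theta_\ell)$ and (b) the $K_\ell \le K_{\ell-1}+1$ consequence of the novel first stopping criterion, which is exactly what makes the $\B\E[L_M]$ (rather than something larger) bound possible.
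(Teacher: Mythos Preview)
Your proposal is correct and follows essentially the same approach as the paper: reduce the inner sum to $K_\ell V(\sinit;\theta_\ell)$, use $K_\ell \le K_{\ell-1}+1$, apply the posterior-sampling property with the $\calF_{t_\ell}$-measurable quantity $K_{\ell-1}+1$, and conclude via $\sum_{\ell} K_{\ell-1} \le K$ and $V(\sinit;\theta_*)\le \B$. The only refinement the paper makes explicit (and which you correctly flagged as the main obstacle) is handling the random upper limit $L_M$ by rewriting the sum as $\sum_{\ell=1}^{\infty}\E[\one_{\{m(t_\ell)\le M\}}\,\cdot\,]$ via monotone convergence, so that the indicator $\one_{\{m(t_\ell)\le M\}}(K_{\ell-1}+1)$ is the $\calF_{t_\ell}$-measurable variable fed into Lemma~\ref{lem: property of ps}.
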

The rest of the proof proceeds to bound the third term $R_M^3$ which contributes to the dominant term of the final regret bound. The detailed proof can be found in Lemma~\ref{lem: r3}. Here we provide the proof sketch. $R_M^3$ captures the difference between $V(\cdot;\theta_\ell)$ at the next state $s_t' \sim \theta_*(\cdot|s_t, a_t)$ and its expectation with respect to the sampled $\theta_\ell$. Applying the Hoeffding-type concentration bounds \citep{weissman2003inequalities}, as used by \cite{ouyang2017learning} yields a regret bound of $\order(K^{2/3})$ which is sub-optimal. To achieve the optimal dependency on $K$, we use a technique based on the Bernstein concentration bound inspired by the work of \cite{rosenberg2020near}. This requires a more careful analysis. Let $n_{t_\ell}(s, a, s')$ be the number of visits to state-action pair $(s, a)$ followed by state $s'$ before time $t_\ell$. For a fixed state-action pair $(s, a)$, define the Bernstein confidence set using the empirical transition probability $\hatthetal(s'|s, a) := \frac{n_{t_\ell}(s, a, s')}{n_{t_\ell}(s, a)}$ as
\begin{align}
\label{eq: bernstein confidence set}
B_\ell(s, a) := \cbr{\theta(\cdot|s, a) : \abs{\theta(s'|s, a) - \hatthetal(s' | s, a)} \leq 4\sqrt{\hatthetal(s'|s, a)\Alsa} + 28\Alsa, \forall s' \in \calS^+}.
\end{align}
Here $\Alsa := \Aldef$ and $n_\ell^+(s, a) := \max \{n_{t_\ell}(s, a), 1\}$. This confidence set is similar to the one used by \citet{rosenberg2020near} and contains the true transition probability $\theta_*(\cdot|s, a)$ with high probability (see Lemma~\ref{lem: high prob bernstein}). Note that $B_\ell(s, a)$ is  $\calF_{t_\ell}$-measurable which allows us to use the property of posterior sampling (Lemma~\ref{lem: property of ps}) to conclude that $B_\ell(s, a)$ contains the sampled transition probability $\theta_\ell(\cdot|s, a)$ as well with high probability. With some algebraic manipulation, $R_M^3$ can be written as (with abuse of notation $\ell := \ell(t)$ is the epoch at time $t$)
\begin{align*}
R_M^3 = \E\sbr{\sum_{t=1}^{T_M}\sumsp\sbr{\theta_*(s'|s_t, a_t) - \thetalm(s'|s_t, a_t)}\rbr{V(s';\thetalm) - \sumsdp \theta_*(s'' | s_t, a_t)V(s''; \thetalm)}}.
\end{align*}
Under the event that both $\theta_*(\cdot|s_t, a_t)$ and $\theta_\ell(\cdot|s_t, a_t)$ belong to the confidence set $B_\ell(s_t, a_t)$, Bernstein bound can be applied to obtain
\begin{align*}
R_M^3 \approx \order\rbr{\E\sbr{\sum_{t=1}^{T_M}\sqrt{S\Alstat\Vlstat}}} = \order\rbr{\summ\E\sbr{\sumtm\sqrt{S\Alstat\Vlstat}}},
\end{align*}
where $t_m$ denotes the start time of interval $m$ and $\mathbb{V}_\ell$ is the empirical variance defined as
\begin{align}
\label{eq: empirical variance}
\Vlstat := \sumsp \theta_*(s'|s_t, a_t)\rbr{V(s';\thetalm) - \sumsdp\theta_*(s''|s_t, a_t)V(s''; \thetalm)}^2. 
\end{align}
Applying Cauchy Schwarz on the inner sum twice implies that
\begin{align*}
R_M^3 \approx \order\rbr{\summ \rbr{\sqrt{S\E\sbr{\sumtm\Alstat }} \cdot \sqrt{\E\sbr{\sumtm\Vlstat}}}}
\end{align*}
Using the fact that all the state-action pairs $(s_t, a_t)$ within an interval except possibly the first one are known, and that the cumulative cost within an interval is at most $2\B$, one can bound $\E\sbr{\sumtm\Vlstat} \  = \order(\B^2)$ (see Lemma~\ref{lem: sum of variance} for details). Applying Cauchy Schwarz again implies
\begin{align*}
R_M^3 \approx \order\rbr{\B \sqrt{MS \E\sbr{\sum_{t=1}^{T_M}\Alstat}}} \approx \order\rbr{\B S\sqrt{MA}}.
\end{align*}
This argument is formally presented in the following lemma.
\begin{lemma}
\label{lem: r3}
The third term $R_M^3$ can be bounded as
\begin{align*}
R_M^3 \leq 288\B S \sqrt{MA \log^2\frac{SA\E[T_M]}{\delta}} + 1632 \B S^2A\log^2\frac{SA\E[T_M]}{\delta} + 4S\B \delta \E[L_M].
\end{align*}
\end{lemma}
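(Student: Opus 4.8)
The plan is to prove \pref{lem: r3} by making rigorous the heuristic argument sketched just above the statement. I would start from the rewritten form
$R_M^3 = \E\sbr{\sum_{t=1}^{T_M}\sumsp\sbr{\theta_*(s'|s_t, a_t) - \thetalm(s'|s_t, a_t)}\rbr{V(s';\thetalm) - \sumsdp \theta_*(s'' | s_t, a_t)V(s''; \thetalm)}}$, which follows by adding and subtracting $\sumsdp\theta_*(s''|s_t,a_t)V(s'';\thetalm)$ inside the bracket and noting the added term telescopes against nothing (the $\theta_*-\theta_\ell$ factor sums to zero over $s'$). The first step is to condition on the good event $\calE$ that the true kernel $\theta_*(\cdot|s,a) \in B_\ell(s,a)$ for all epochs $\ell$ and all $(s,a)$; by \pref{lem: high prob bernstein} this holds with probability at least $1-\delta$ (or similar), and on the complement the summand is crudely bounded by $\B$ per step and there are at most $\order(T_M)$ steps, which after taking expectations contributes the $4S\B\delta\E[L_M]$-type lower-order term (using that $\E[T_M]$ is controlled in terms of $\E[L_M]$ and $\B$, via the interval structure and \pref{lem: number of epochs}). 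Crucially, because $B_\ell(s,a)$ is $\calF_{t_\ell}$-measurable, \pref{lem: property of ps} transfers membership from $\theta_*$ to $\theta_\ell$, so on $\calE$ both $\theta_*(\cdot|s_t,a_t)$ and $\theta_\ell(\cdot|s_t,a_t)$ lie in $B_\ell(s_t,a_t)$.

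On $\calE$, I would bound $\abs{\theta_*(s'|s_t,a_t)-\theta_\ell(s'|s_t,a_t)}$ using the triangle inequality through $\hatthetal(s'|s_t,a_t)$ and the defining inequality of $B_\ell$, getting a bound of the form $\order\rbr{\sqrt{\theta_*(s'|s_t,a_t)\Alstat} + \Alstat}$ (after converting $\hatthetal$ to $\theta_*$, which costs only lower-order terms — this is a standard manipulation in \cite{rosenberg2020near}). Plugging this in and applying Cauchy–Schwarz over $s'\in\calS^+$, the $\sqrt{\theta_*\Alstat}$ part yields $\order\rbr{\sqrt{S\Alstat}\cdot\sqrt{\Vlstat}}$ with $\Vlstat$ the empirical variance from \eqref{eq: empirical variance}, and the $\Alstat$ part yields $\order(S\Alstat\cdot\spn V(\cdot;\theta_\ell)) = \order(S\Alstat\B)$, which summed over $t$ gives the $\B S^2A\log^2$ lower-order term via the standard pigeonhole bound $\sum_{t=1}^{T_M}\Alstat = \order(SA\log T_M)$. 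The remaining dominant piece is $\order\rbr{\E\sbr{\sum_{t=1}^{T_M}\sqrt{S\Alstat\Vlstat}}}$.

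To handle the dominant piece I would break the time sum into intervals, $\sum_{t=1}^{T_M} = \summ\sumtm$, apply Cauchy–Schwarz twice: first on the inner sum over $t\in[t_m,t_{m+1})$ to get $\sqrt{S\E\sbr{\sumtm\Alstat}}\cdot\sqrt{\E\sbr{\sumtm\Vlstat}}$ (carefully — $\Alstat$ depends on the epoch of $t$, but since intervals nest inside epochs, within an interval $\ell$ is either constant or this is absorbed; actually the cleaner route is Cauchy–Schwarz over $t$ directly with conditional expectations), then invoke \pref{lem: sum of variance} to replace $\E\sbr{\sumtm\Vlstat}$ by $\order(\B^2)$, using that within an interval at most the first state-action pair is unknown and the cumulative cost is at most $2\B$. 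That leaves $\order\rbr{\B\summ\sqrt{S\E\sbr{\sumtm\Alstat}}}$, and a final Cauchy–Schwarz over $m=1,\dots,M$ gives $\order\rbr{\B\sqrt{MS\cdot\E\sbr{\sum_{t=1}^{T_M}\Alstat}}} = \order\rbr{\B\sqrt{MS\cdot SA\log\E[T_M]}} = \order\rbr{\B S\sqrt{MA\log\E[T_M]}}$, matching the leading term after tracking the numerical constant $288$ through the chain (the constants $4,28$ in $B_\ell$ propagate into $288$ and $1632$). I expect the main obstacle to be the careful bookkeeping around conditioning: ensuring the Cauchy–Schwarz steps over intervals and over the state sum interact correctly with the expectation and the measurability of $B_\ell$, and that the crude $\calE^c$ bound is genuinely lower order — this requires the bound on $\E[T_M]$ in terms of $\E[L_M]$, $\B$, and the known/unknown decomposition, which is where a circular dependence could sneak in if one is not careful to state the inequality for fixed $M$ before summing.
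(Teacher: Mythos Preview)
Your handling of the dominant term on the good event is essentially the paper's argument, but your treatment of the bad event contains a genuine gap.

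First, the posterior sampling property does not do what you claim. \pref{lem: property of ps} says that $\theta_\ell$ and $\theta_*$ have the same \emph{conditional distribution} given $\calF_{t_\ell}$; it does \emph{not} say that on the event $\{\theta_*(\cdot|s,a)\in B_\ell(s,a)\}$ we also have $\theta_\ell(\cdot|s,a)\in B_\ell(s,a)$. These are two different random variables, and the event $\{\theta_\ell\in B_\ell\}$ is not implied by $\calE$. What the lemma does give you is equality of failure \emph{probabilities}: $\E[\one_{\{\theta_\ell\notin B_\ell(s,a)\}}\mid\calF_{t_\ell}] = \E[\one_{\{\theta_*\notin B_\ell(s,a)\}}\mid\calF_{t_\ell}]$. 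So you must carry both indicator events explicitly rather than condition on a single global $\calE$.

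Second, even granting a global bad event of probability $\leq\delta$, the crude bound $\B$ per step times $T_M$ steps does not yield $\order(\B\delta\E[L_M])$: $T_M$ and $\calE^c$ are correlated, $T_M$ is not a priori bounded, and $\E[T_M\one_{\calE^c}]$ cannot be factored as $\E[T_M]\cdot\mathbb{P}(\calE^c)$. Your own caveat about circularity is exactly the problem, and it is not resolved by ``stating the inequality for fixed $M$.''

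The paper avoids both issues by localizing: define $\Omega^\ell_{s,a}=\{\theta_*(\cdot|s,a)\in B_\ell(s,a)\}\cap\{\theta_\ell(\cdot|s,a)\in B_\ell(s,a)\}$ and split the sum per epoch $\ell$ and per state-action pair. On the complement, the number of visits to $(s,a)$ during epoch $\ell$ is at most $n_{t_\ell}(s,a)+1\leq 2n_\ell^+(s,a)$ by the doubling criterion, while \pref{lem: high prob bernstein} gives a failure probability of at most $\delta/(2SAn_\ell^+(s,a))$ for $\theta_*$, and the posterior sampling equality of conditional expectations gives the same bound for $\theta_\ell$. The $n_\ell^+(s,a)$ factors cancel, leaving $\order(\delta/(SA))$ per $(\ell,s,a)$, hence $\order(\delta\E[L_M])$ after summing, and the $2S\B$ prefactor produces $4S\B\delta\E[L_M]$. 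This cancellation via the doubling trick is the missing idea in your proposal.
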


Detailed proofs of all lemmas and the theorem can be found in the appendix in the supplementary material.
% !TEX root = main21neurips-ssp.tex

\section{Experiments}
\label{sec: experiments}

In this section, the performance of our \ssp~algorithm is compared with existing OFU-type algorithms in the literature. Two environments are considered: RandomMDP and GridWorld. RandomMDP \citep{ouyang2017learning,wei2020model} is an SSP with 8 states and 2 actions whose transition kernel and cost function are generated uniformly at random. GridWorld \citep{tarbouriech2020no} is a $3\times 4$ grid (total of 12 states including the goal state) and 4 actions (LEFT, RIGHT, UP, DOWN) with $c(s, a) = 1$ for any state-action pair $(s, a) \in \calS \times \calA$. The agent starts from the initial state located at the top left corner of the grid, and ends in the goal state at the bottom right corner. At each time step, the agent attempts to move in one of the four directions. However, the attempt is successful only with probability 0.85. With probability 0.15, the agent takes any of the undesired directions uniformly at random. If the agent tries to move out of the boundary, the attempt will not be successful and it remains in the same position.

In the experiments, we evaluate the frequentist regret of \ssp~for a fixed environment (i.e., the environment is not sampled from a prior distribution). %Thus, the \ssp~algorithm can be viewed as a completely parameter-free algorithm in the numerical results. 
A Dirichlet prior with parameters $[0.1, \cdots, 0.1]$ is considered for the transition kernel. Dirichlet is a common prior in Bayesian statistics since it is a conjugate prior for categorical and multinomial distributions.

We compare the performance of our proposed \ssp~against existing online learning algorithms for the SSP problem (\texttt{UC-SSP} \citep{tarbouriech2020no}, \texttt{Bernstein-SSP} \citep{rosenberg2020near}, \texttt{ULCVI} \citep{cohen2021minimax}, and \texttt{EB-SSP} \citep{tarbouriech2021stochastic}). The algorithms are evaluated at $K= 10,000$ episodes and the results are averaged over 10 runs. 95\% confidence interval is considered to compare the performance of the algorithms. All the experiments are performed on a 2015 Macbook Pro with 2.7 GHz Dual-Core Intel Core i5 processor and 16GB RAM.

\begin{figure}[t]
	\centering
	\begin{tabular}{cc}
		\includegraphics[width=0.5\textwidth]{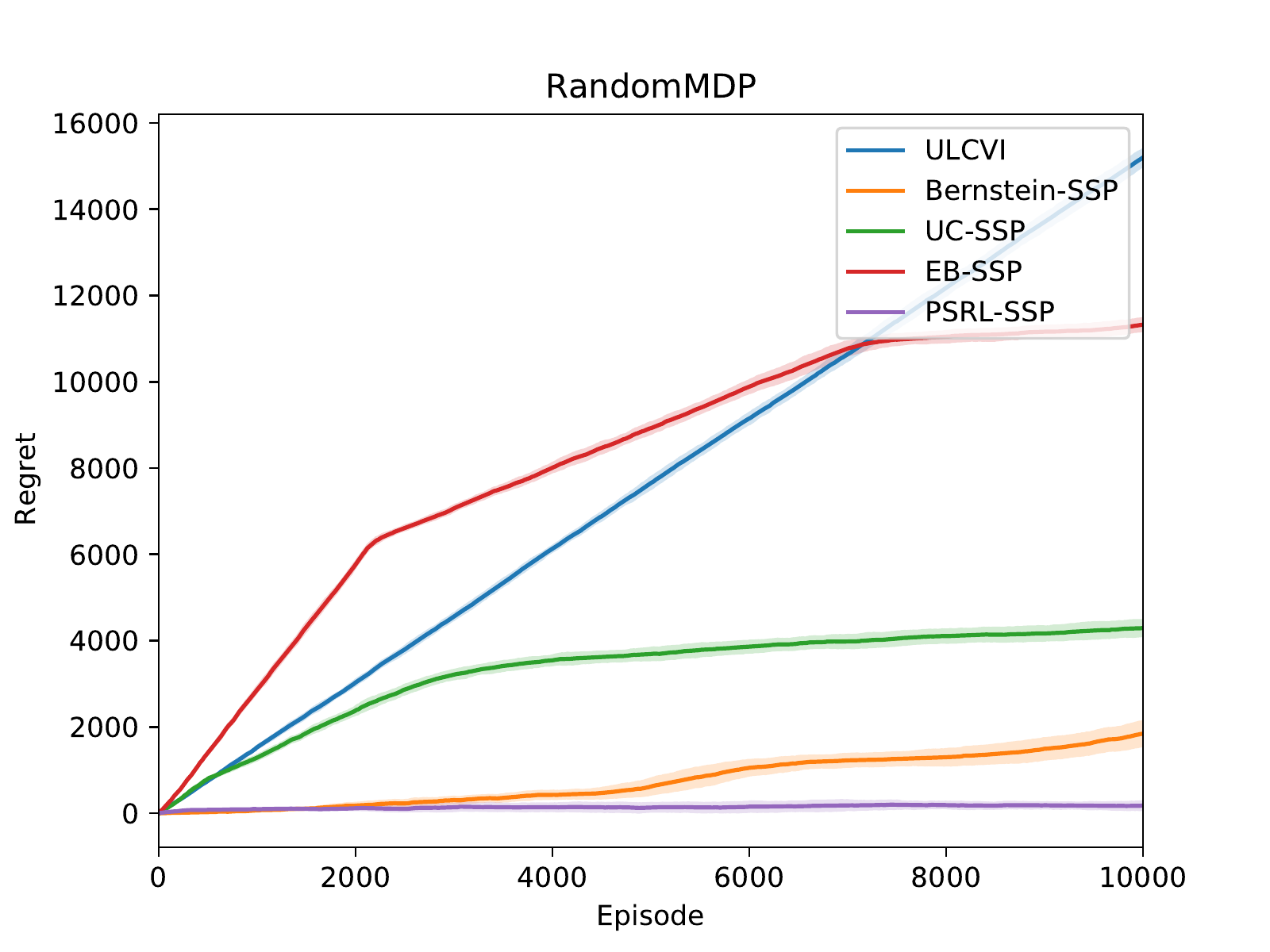} &
		\includegraphics[width=0.5\textwidth]{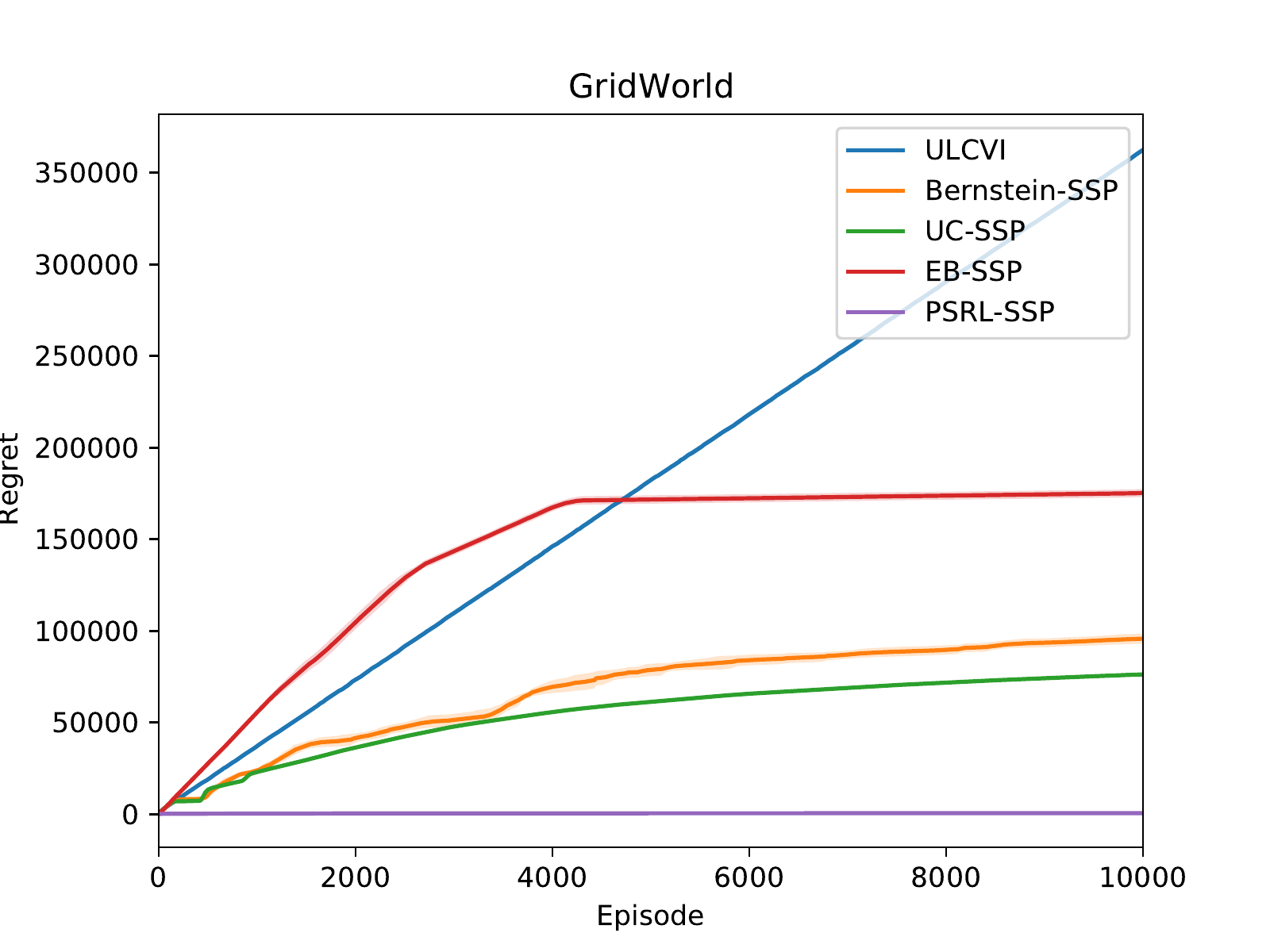}
	\end{tabular}
	\caption{
		Cumulative regret of existing SSP algorithms on RandomMDP (left) and GridWorld (right) for $10,000$ episodes. The results are averaged over 10 runs and 95\% confidence interval is shown with the shaded area. Our proposed \ssp~algorithm outperforms all the existing algorithms considerably. The performance gap is even more significant in the more challenging GridWorld environment (right).
	}
	\label{fig: plot}
\end{figure}

Figure~\ref{fig: plot} shows that \ssp~outperforms all the previously proposed algorithms for the SSP problem, significantly. In particular, it outperforms the recently proposed \texttt{ULCVI} \citep{cohen2021minimax} and \texttt{EB-SSP} \citep{tarbouriech2021stochastic} which match the theoretical lower bound. Our numerical evaluation reveals that the \texttt{ULCVI} algorithm does not show any evidence of learning even after 80,000 episodes (not shown here). %This suggests that the theoretical constants chosen for this algorithm are too conservative. 
The poor performance of these algorithms ensures the necessity to consider PS algorithms in practice.

The gap between the performance of \ssp~and OFU algorithms is even more apparent in the GridWorld environment which is more challenging compared to RandomMDP. Note that in RandomMDP, it is possible to go to the goal state from any state with only one step. This is since the transition kernel is generated uniformly at random. However, in the GridWorld environment, the agent has to take a sequence of actions to the right and down to reach the goal at the bottom right corner. Figure~\ref{fig: plot}(right) verifies that \ssp~is able to learn this pattern significantly faster than OFU algorithms.

Since these plots are generated for a fixed environment (not generated from a prior), we conjecture that \ssp~enjoyed the same regret bound under the non-Bayesian setting.

\section*{Conclusions}

In this paper, we have proposed the first posterior sampling-based reinforcement learning algorithm for the SSP models with unknown transition probabilities. The algorithm is very simple as compared to the optimism-based algorithm proposed for SSP models recently \citep{tarbouriech2020no,rosenberg2020near,cohen2021minimax,tarbouriech2021stochastic}. It achieves a  Bayesian regret bound of $\otil(\B S\sqrt{AK})$, where $\B$ is an upper bound on the expected cost of the optimal policy, $S$ is the size of the state space, $A$ is the size of the action space, and $K$ is the number of episodes. This has a $\sqrt{S}$ gap from the best known bound for an optimism-based algorithm but numerical experiments suggest a better performance in practice. A next step would be to extend the algorithm to continuous state and action spaces, and to propose model-free algorithms for such settings. Designing posterior sampling-based model-free algorithms for even average MDPs remains an open problem.

\newpage
\bibliographystyle{plainnat}
\bibliography{online_rl}

\newpage
\appendix

% !TEX root = main21neurips-ssp.tex

\section{Proofs}
\label{app: proofs}

\subsection{Proof of Lemma~\ref{lem: number of epochs}}\label{app:proof:lem: number of epochs}
\textbf{Lemma} (restatement of Lemma~\ref{lem: number of epochs})\textbf{.} The number of epochs is bounded as $L_M \leq  \sqrt{2SAK\log T_M} + SA\log T_M$.
\begin{proof}
Define macro epoch $i$ with start time $t_{u_i}$ given by $t_{u_1} = t_1$, and
\begin{align*}
t_{u_{i+1}} = \min \cbr{t_\ell > t_{u_i} : n_{t_\ell}(s, a) > 2n_{t_\ell-1}(s, a) \text{ for some } (s, a)}, \qquad i = 2, 3, \cdots.
\end{align*}
A macro epoch starts when the second criterion of determining epoch length triggers. Let $N_M$ be a random variable denoting the total number of macro epochs by the end of interval $M$ and define $u_{N_M + 1} := L_M + 1$. 

Recall that $K_\ell$ is the number of visits to the goal state in epoch $\ell$. Let $\tilde{K}_i := \sum_{\ell=u_i}^{u_{i+1}-1}K_\ell$ be the number of visits to the goal state in macro epoch $i$. By definition of macro epochs, all the epochs within a macro epoch except the last one are triggered by the first criterion, i.e., $K_\ell = K_{\ell-1}+1$ for $\ell = u_i, \cdots, u_{i+1}-2$. Thus,
\begin{align*}
\tilde{K_i} = \sum_{\ell=u_i}^{u_{i+1}-1}K_\ell = K_{u_{i+1}-1} + \sum_{j=1}^{u_{i+1}-u_i - 1}(K_{u_i-1}+j) \geq \sum_{j=1}^{u_{i+1}-u_i - 1}j = \frac{(u_{i+1}-u_i - 1)(u_{i+1}-u_i)}{2}.
\end{align*}
Solving for $u_{i+1}-u_i$ implies that $u_{i+1}-u_i \leq 1 + \sqrt{2\tilde{K_i}}$. We can write
\begin{align*}
L_M = u_{N_M+1} - 1 = \sum_{i=1}^{N_M}\rbr{u_{i+1}-u_i} &\leq \sum_{i=1}^{N_M}\rbr{1 + \sqrt{2\tilde{K_i}}} = N_M + \sum_{i=1}^{N_M}\sqrt{2\tilde{K_i}} \\
&\leq N_M + \sqrt{2N_M \sum_{i=1}^{N_M}\tilde{K_i}} = N_M + \sqrt{2N_M K},
\end{align*}
where the second inequality follows from Cauchy-Schwarz. It suffices to show that the number of macro epochs is bounded as $N_M \leq 1 + SA\log T_M$. Let $\calT_{s, a}$ be the set of all time steps at which the second criterion is triggered for state-action pair $(s, a)$, i.e.,
\begin{align*}
\calT_{s, a} := \cbr{t_\ell \leq T_M : n_{t_\ell}(s, a) > 2 n_{t_{\ell-1}}(s, a)}.
\end{align*}
We claim that $\abr{\calT_{s, a}} \leq \log n_{T_M+1}(s, a)$. To see this, assume by contradiction that $\abr{\calT_{s, a}} \geq 1 + \log n_{T_M+1}(s, a)$, then
\begin{align*}
n_{t_{L_M}}(s, a) &= \prod_{t_\ell \leq T_M, n_{t_{\ell-1}}(s, a) \geq 1}\frac{n_{t_\ell}(s, a)}{n_{t_{\ell-1}}(s, a)} \geq \prod_{t_\ell \in \calT_{s, a}, n_{t_{\ell-1}}(s, a) \geq 1}\frac{n_{t_\ell}(s, a)}{n_{t_{\ell-1}}(s, a)} \\
&> 2^{\abr{\calT_{s, a}} - 1} \geq n_{T_M+1}(s, a),
\end{align*}
which is a contradiction. Thus, $\abr{\calT_{s, a}} \leq \log n_{T_M+1}(s, a)$ for all $(s, a)$. In the above argument, the first inequality is by the fact that $n_t(s, a)$ is non-decreasing in $t$, and the second inequality is by the definition of $\calT_{s, a}$. Now, we can write
\begin{align*}
N_M &= 1 + \sum_{s, a}\abr{\calT_{s, a}} \leq 1 + \sum_{s, a} \log n_{T_M+1}(s, a) \\
&\leq 1 + SA\log \frac{\sum_{s, a}n_{T_M+1}(s, a)}{SA} = 1 + SA \log \frac{T_M}{SA} \leq SA\log T_M,
\end{align*}
where the second inequality follows from Jensen's inequality.
\end{proof}

\subsection{Proof of Lemma~\ref{lem: bounding R1}}\label{app:proof:lem: bounding R1}
\textbf{Lemma} (restatement of Lemma~\ref{lem: bounding R1})\textbf{.} The first term $R_M^1$ is bounded as $R_M^1 \leq \B \E[L_M]$.
\begin{proof}
Recall
\begin{align*}
R_M^1 = \E\sbr{\sum_{\ell = 1}^{L_M}\sum_{t=t_\ell}^{t_{\ell+1}-1}\sbr{V(s_t;\theta_\ell) - V(s_{t+1};\theta_\ell)}} 
\end{align*}
Observe that the inner sum is a telescopic sum, thus
\begin{align*}
R_M^1 = \E\sbr{\sum_{\ell = 1}^{L_M}\sbr{V(s_{t_\ell};\theta_\ell) - V(s_{t_{\ell+1}};\theta_\ell)}} \leq \B \E[L_M],
\end{align*}
where the inequality is by Assumption~\ref{ass: class of ssp}.
\end{proof}

\subsection{Proof of Lemma~\ref{lem: r2}}\label{sec:proof:lem: r2}
\textbf{Lemma} (restatement of Lemma~\ref{lem: r2})\textbf{.} The second term $R_M^2$ is bounded as $R_M^2 \leq \B \E[L_M]$.
\begin{proof}
Recall that $K_\ell$ is the number of times the goal state is reached during epoch $\ell$. By definition, the only time steps that $s'_t \neq s_{t+1}$ is right before reaching the goal. Thus, with $V(g;\theta_\ell) = 0$, we can write
\begin{align*}
R_M^2 &= \E\sbr{\sum_{\ell = 1}^{L_M}\sum_{t=t_\ell}^{t_{\ell+1}-1}\sbr{V(s_{t+1};\theta_\ell) - V(s'_t;\theta_\ell)}} - K\E\sbr{V(\sinit;\theta_*)} \\
&= \E\sbr{\sum_{\ell = 1}^{L_M}V(\sinit;\theta_\ell)K_\ell} - K\E\sbr{V(\sinit;\theta_*)} \\
&= \sum_{\ell = 1}^{\infty}\E\sbr{\one_{\{m(t_\ell) \leq M\}}V(\sinit;\theta_\ell)K_\ell} - K\E\sbr{V(\sinit;\theta_*)},
\end{align*}
where the last step is by Monotone Convergence Theorem. Here $m(t_\ell)$ is the interval at time $t_\ell$. Note that from the first stopping criterion of the algorithm we have $K_\ell \leq K_{\ell-1}+1$ for all $\ell$. Thus, each term in the summation can be bounded as
\begin{align*}
\E\sbr{\one_{\{m(t_\ell) \leq M\}}V(\sinit;\theta_\ell)K_\ell} \leq \E\sbr{\one_{\{m(t_\ell) \leq M\}}V(\sinit;\theta_\ell)(K_{\ell-1} + 1)}.
\end{align*}
$\one_{\{m(t_\ell) \leq M\}}(K_{\ell-1} + 1)$ is $\calF_{t_\ell}$ measurable. Therefore, applying the property of posterior sampling (Lemma~\ref{lem: property of ps}) implies
\begin{align*}
\E\sbr{\one_{\{m(t_\ell) \leq M\}}V(\sinit;\theta_\ell)(K_{\ell-1} + 1)} = \E\sbr{\one_{\{m(t_\ell) \leq M\}}V(\sinit;\theta_*)(K_{\ell-1} + 1)}
\end{align*}
Substituting this into $R_M^2$, we obtain
\begin{align*}
R_M^2 &\leq \sum_{\ell = 1}^{\infty}\E\sbr{\one_{\{m(t_\ell) \leq M\}}V(\sinit;\theta_*)(K_{\ell-1}+1)} - K\E\sbr{V(\sinit;\theta_*)} \\
&= \E\sbr{\sum_{\ell = 1}^{L_M}V(\sinit;\theta_*)(K_{\ell-1}+1)} - K\E\sbr{V(\sinit;\theta_*)} \\
&= \E\sbr{V(\sinit;\theta_*)\rbr{\sum_{\ell = 1}^{L_M}K_{\ell-1} - K}} + \E\sbr{V(\sinit;\theta_*)L_M} \leq \B\E[L_M].
\end{align*}
In the last inequality we have used the fact that $0 \leq V(\sinit;\theta_*) \leq \B$ and $\sum_{\ell = 1}^{L_M}K_{\ell-1} \leq K$.
\end{proof}

\subsection{Proof of Lemma~\ref{lem: r3}}\label{app:proof:lem: r3}
\textbf{Lemma} (restatement of Lemma~\ref{lem: r3})\textbf{.} The third term $R_M^3$ can be bounded as
\begin{align*}
R_M^3 \leq 288\B S \sqrt{MA \log^2\frac{SA\E[T_M]}{\delta}} + 1632 \B S^2A\log^2\frac{SA\E[T_M]}{\delta} + 4S\B \delta \E[L_M].
\end{align*}
\begin{proof}
With abuse of notation let $\ell := \ell(t)$ denote the epoch at time $t$ and $m(t)$ be the interval at time $t$. We can write
\begin{align*}
&R_M^3 = \E\sbr{\sum_{t=1}^{T_M}\sbr{V(s'_t;\theta_\ell) - \sum_{s'}\thetalm(s'|s_t, a_t)V(s';\thetalm)}} \\
&= \E\sbr{\sum_{t=1}^{\infty}\one_{\{m(t) \leq M\}}\sbr{V(s'_t;\theta_\ell) - \sum_{s'}\thetalm(s'|s_t, a_t)V(s';\thetalm)}} \\
&= \sum_{t=1}^{\infty}\E\sbr{\one_{\{m(t) \leq M\}}\E\sbr{V(s'_t;\theta_\ell) - \sum_{s'}\thetalm(s'|s_t, a_t)V(s';\thetalm)\Big| \calF_t, \theta_*, \theta_\ell}}.
\end{align*}
The last equality follows from Dominated Convergence Theorem, tower property of conditional expectation, and that $\one_{\{m(t) \leq M\}}$ is measurable with respect to $\calF_t$. Note that conditioned on $\calF_t$, $\theta_*$ and $\theta_\ell$, the only random variable in the inner expectation is $s'_t$. Thus, $\E[V(s'_t;\theta_\ell) | \calF_t, \theta_*, \theta_\ell] = \sum_{s'}\theta_*(s'|s_t, a_t)V(s';\theta_\ell)$. Using Dominated Convergence Theorem again implies that
\begin{align}
&R_M^3 = \E\sbr{\sum_{t=1}^{T_M}\sumsp\sbr{\theta_*(s'|s_t, a_t) - \thetalm(s'|s_t, a_t)}V(s';\thetalm)} \nonumber \\
&= \E\sbr{\sum_{t=1}^{T_M}\sumsp\sbr{\theta_*(s'|s_t, a_t) - \thetalm(s'|s_t, a_t)}\rbr{V(s';\thetalm) - \sumsdp \theta_*(s'' | s_t, a_t)V(s''; \thetalm)}}, \label{eq: pf lem rr3 tmp1}
\end{align}
where the last equality is due to the fact that $\theta_*(\cdot|s_t, a_t)$ and $\thetalm(\cdot|s_t, a_t)$ are probability distributions and that $\sumsdp \theta_*(s'' | s_t, a_t)V(s''; \thetalm)$ is independent of $s'$.

Recall the Bernstein confidence set $B_\ell(s, a)$ defined in \eqref{eq: bernstein confidence set} and let $\Omega^\ell_{s, a}$ be the event that both $\theta_*(\cdot|s, a)$ and $\theta_\ell(\cdot|s, a)$ are in $B_\ell(s, a)$. If $\Omega^\ell_{s, a}$ holds, then the difference between $\theta_*(\cdot|s, a)$ and $\theta_\ell(\cdot|s, a)$ can be bounded by the following lemma.
\begin{lemma}
Denote $\Alsa = \Aldef$. If $\Omega^\ell_{s, a}$ holds, then 
\begin{align*}
\abr{\theta_*(s'|s, a) - \theta_\ell(s'|s, a)} \leq 8\sqrt{\theta_*(s'|s, a)\Alsa} + 136\Alsa.
\end{align*}
\begin{proof}
Since $\Omega^\ell_{s, a}$ holds, by \eqref{eq: bernstein confidence set} we have that
\begin{align*}
\hatthetal(s' | s, a) - \theta_*(s'|s, a) \leq 4\sqrt{\hatthetal(s'|s, a)\Alsa} + 28\Alsa.
\end{align*}
Using the primary inequality that $x^2 \leq ax + b$ implies $x \leq a + \sqrt b$ with $x = \sqrt{\hatthetal(s' | s, a)}$, $a = 4\sqrt{\Alsa}$, and $b = \theta_*(s'|s, a) + 28\Alsa$, we obtain
\begin{align*}
\sqrt{\hatthetal(s' | s, a)} \leq 4\sqrt{\Alsa} + \sqrt{\theta_*(s'|s, a) + 28\Alsa} \leq \sqrt{\theta_*(s'|s, a)} + 10\sqrt{\Alsa},
\end{align*}
where the last inequality is by sub-linearity of the square root. Substituting this bound into ~\eqref{eq: bernstein confidence set} yields
\begin{align*}
\abs{\theta_*(s'|s, a) - \hatthetal(s' | s, a)} \leq 4\sqrt{\theta_*(s'|s, a)\Alsa} + 68\Alsa.
\end{align*}
Similarly,
\begin{align*}
\abs{\theta_\ell(s'|s, a) - \hatthetal(s' | s, a)} \leq 4\sqrt{\theta_*(s'|s, a)\Alsa} + 68\Alsa.
\end{align*}
Using the triangle inequality completes the proof.
\end{proof}
\label{lem: theta_star minus theta_l}
\end{lemma}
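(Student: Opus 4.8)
The plan is to prove the bound on $\abr{\theta_*(s'|s,a) - \theta_\ell(s'|s,a)}$ by routing through the empirical transition probability $\hatthetal(s'|s,a)$. Since the event $\Omega^\ell_{s,a}$ puts both $\theta_*(\cdot|s,a)$ and $\theta_\ell(\cdot|s,a)$ inside the Bernstein confidence set $B_\ell(s,a)$ from \eqref{eq: bernstein confidence set}, each of these is within the stated Bernstein radius of $\hatthetal$; the triangle inequality then controls $\abr{\theta_*(s'|s,a) - \theta_\ell(s'|s,a)}$ by twice such a radius. The catch is that the radius in \eqref{eq: bernstein confidence set} is written in terms of $\sqrt{\hatthetal(s'|s,a)}$ rather than $\sqrt{\theta_*(s'|s,a)}$, so the main work is to re-express the empirical quantity in terms of the true one.

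First I would write down, from membership of $\theta_*$ in $B_\ell(s,a)$, the one-sided inequality $\hatthetal(s'|s,a) - \theta_*(s'|s,a) \le 4\sqrt{\hatthetal(s'|s,a)\Alsa} + 28\Alsa$, and read this as a quadratic inequality in the variable $x = \sqrt{\hatthetal(s'|s,a)}$, namely $x^2 \le 4\sqrt{\Alsa}\,x + (\theta_*(s'|s,a) + 28\Alsa)$. Applying the elementary fact that $x^2 \le ax+b$ (with $a,b \ge 0$) forces $x \le a + \sqrt b$, together with subadditivity of the square root, gives $\sqrt{\hatthetal(s'|s,a)} \le \sqrt{\theta_*(s'|s,a)} + 10\sqrt{\Alsa}$ (here $4 + \sqrt{28} < 10$). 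Squaring or, more cleanly, substituting this back into the defining inequality of $B_\ell(s,a)$ and again using subadditivity of the root yields $\abr{\theta_*(s'|s,a) - \hatthetal(s'|s,a)} \le 4\sqrt{\theta_*(s'|s,a)\Alsa} + 68\Alsa$.

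The same chain of steps applied to $\theta_\ell$ — whose membership in $B_\ell(s,a)$ holds on $\Omega^\ell_{s,a}$, and which sits against the \emph{same} $\hatthetal$ — gives $\abr{\theta_\ell(s'|s,a) - \hatthetal(s'|s,a)} \le 4\sqrt{\theta_*(s'|s,a)\Alsa} + 68\Alsa$; note the bound is still in terms of $\theta_*$, not $\theta_\ell$, precisely because we already converted $\sqrt{\hatthetal}$ to $\sqrt{\theta_*}$ in the previous step. Adding the two bounds via the triangle inequality produces $\abr{\theta_*(s'|s,a) - \theta_\ell(s'|s,a)} \le 8\sqrt{\theta_*(s'|s,a)\Alsa} + 136\Alsa$, which is exactly the claim.

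There is no real obstacle here — the argument is a short deterministic manipulation, with the only mild subtlety being the bookkeeping of constants (checking $4+\sqrt{28}\le 10$ and $28+4\cdot 10 = 68$, then $68+68 = 136$) and making sure one always expresses the empirical radius in terms of $\sqrt{\theta_*}$ before invoking the triangle inequality, so that the final bound depends only on the true transition probability. The genuinely probabilistic content — that $\Omega^\ell_{s,a}$ holds with high probability and that $B_\ell(s,a)$ is $\calF_{t_\ell}$-measurable so the posterior-sampling property transfers the guarantee from $\theta_*$ to $\theta_\ell$ — is assumed from the surrounding lemmas and is not needed inside this proof.
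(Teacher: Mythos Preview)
Your proposal is correct and follows essentially the same approach as the paper's own proof: both route through $\hatthetal$, solve the quadratic inequality in $\sqrt{\hatthetal}$ to convert to $\sqrt{\theta_*}$, substitute back into the Bernstein radius, and finish with the triangle inequality. Your constant bookkeeping matches the paper's exactly.
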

Note that if either of $\theta_*(\cdot|s_t, a_t)$ or $\thetalm(\cdot|s_t, a_t)$ is not in $B_\ell(s_t, a_t)$, then the inner term of \eqref{eq: pf lem rr3 tmp1} can be bounded by $2S\B$ (note that $|\calS^+| \leq 2S$ and $V(\cdot;\thetalm) \leq \B$). Thus, applying Lemma~\ref{lem: theta_star minus theta_l} implies that
\begin{align*}
&\sumsp\sbr{\theta_*(s'|s_t, a_t) - \thetalm(s'|s_t, a_t)}\rbr{V(s';\thetalm) - \sumsdp \theta_*(s'' | s_t, a_t)V(s''; \thetalm)} \\
&\leq 8 \sumsp \sqrt{\Alstat\theta_*(s'|s_t, a_t)\rbr{V(s';\thetalm) - \sumsdp\theta_*(s''|s_t, a_t)V(s''; \thetalm)}^2}\one_{\Omega^\ell_{s_t, a_t}}\\
&\qquad + 136 \sumsp \Alstat \abr{V(s';\thetalm) - \sumsdp\theta_*(s''|s_t, a_t)V(s''; \thetalm)}\one_{\Omega^\ell_{s_t, a_t}} \\
&\qquad + 2S\B\rbr{\one_{\{\theta_*(\cdot|s_t, a_t) \notin B_\ell(s_t, a_t)\}} + \one_{\{\thetalm(\cdot|s_t, a_t) \notin B_\ell(s_t, a_t)\}}} \\
&\leq 16\sqrt{S\Alstat\Vlstat}\one_{\Omega^\ell_{s_t, a_t}} + 272S\B \Alstat\one_{\Omega^\ell_{s_t, a_t}} \\
&\qquad+ 2S\B\big(\one_{\{\theta_*(\cdot|s_t, a_t) \notin B_\ell(s_t, a_t)\}} + \one_{\{\thetalm(\cdot|s_t, a_t) \notin B_\ell(s_t, a_t)\}}\big).
\end{align*}
where $\Alsa = \Aldef$ and $\mathbb{V}_\ell(s, a)$ is defined in \eqref{eq: empirical variance}. Here the last inequality follows from Cauchy-Schwarz, $|\calS^+| \leq 2S$, $V(\cdot;\thetalm) \leq \B$ and the definition of $\mathbb{V}_\ell$. Substituting this into \eqref{eq: pf lem rr3 tmp1} yields
\begin{align}
R_M^3 &\leq 16\sqrt S\E\sbr{\sum_{t=1}^{T_M}\sqrt{\Alstat\Vlstat}\oneomega} \label{eq: pf lem R3 tmp1}\\
&\qquad + 272S\B \E\sbr{\sum_{t=1}^{T_M}\Alstat\oneomega}  \label{eq: pf lem R3 tmp2}\\
&\qquad + 2S\B\E\sbr{\sum_{t=1}^{T_M} \rbr{\one_{\{\theta_*(\cdot|s_t, a_t) \notin B_\ell(s_t, a_t)\}} + \one_{\{\thetalm(\cdot|s_t, a_t) \notin B_\ell(s_t, a_t)\}}}}. \label{eq: pf lem R3 tmp3}
\end{align}
The inner sum in \eqref{eq: pf lem R3 tmp2} is bounded by $6SA \log^2 (SAT_M/\delta)$ (see Lemma~\ref{lem: sumt of Alstat}). To bound \eqref{eq: pf lem R3 tmp3}, we first show that $B_\ell(s, a)$ contains the true transition probability $\theta_*(\cdot|s, a)$ with high probability: 
\begin{lemma}
For any epoch $\ell$ and any state-action pair $(s, a) \in \calS \times \calA$, $\theta_*(\cdot|s, a) \in B_\ell(s, a)$ with probability at least $1 - \frac{\delta}{2SAn^+_\ell(s, a)}$.
\label{lem: high prob bernstein}
\end{lemma}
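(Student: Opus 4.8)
The plan is to bound the deviation of the empirical transition probability $\hatthetal(\cdot|s,a)$ from the true transition probability $\theta_*(\cdot|s,a)$ using an empirical Bernstein inequality applied coordinate by coordinate. For a fixed $s' \in \calS^+$, the quantity $\hatthetal(s'|s,a)$ is an average of (conditionally) i.i.d.\ Bernoulli indicators $\one_{\{s'_\tau = s'\}}$ over the visits to $(s,a)$ before time $t_\ell$, so I would first invoke a standard empirical Bernstein bound (e.g.\ the anytime version in \citet{rosenberg2020near}, which itself rests on \citet{audibert2009exploration} or a Freedman-type argument) that controls $|\theta_\ast(s'|s,a) - \hatthetal(s'|s,a)|$ in terms of the \emph{empirical} variance $\hatthetal(s'|s,a)(1-\hatthetal(s'|s,a)) \le \hatthetal(s'|s,a)$ and a $1/n_\ell^+(s,a)$ term, both scaled by $\log(SAn_\ell^+(s,a)/\delta)$. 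This yields exactly a bound of the form $|\theta_\ast(s'|s,a) - \hatthetal(s'|s,a)| \le 4\sqrt{\hatthetal(s'|s,a)\Alsa} + 28\Alsa$ with the constants chosen in \eqref{eq: bernstein confidence set}, so $\theta_\ast(\cdot|s,a) \in B_\ell(s,a)$ on that event.

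The key technical care is handling the random stopping time $t_\ell$ and the random number of samples $n_{t_\ell}(s,a)$. I would apply the concentration inequality as an \emph{anytime} bound: define, for each $n \ge 1$, the event that after the $n$-th visit to $(s,a)$ the empirical mean deviates by more than $4\sqrt{\hat p_n \cdot \frac{\log(SAn/\delta)}{n}} + 28\frac{\log(SAn/\delta)}{n}$ from $\theta_\ast(s'|s,a)$, give each such event failure probability at most roughly $\frac{\delta}{SA}\cdot\frac{1}{n(n+1)}$ (the extra $SA$ factor absorbing a union bound over state-action pairs, the extra $1/n(n+1)$ making the sum over $n$ telescope), and union bound over $n \ge 1$ and over $s' \in \calS^+$. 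Since $|\calS^+| \le S+1 \le 2S$, the total failure probability for a fixed $(s,a)$ is at most $\frac{\delta}{2SA\,n_\ell^+(s,a)}$ after the arithmetic is tuned (the $n^+_\ell(s,a)$ in the denominator of the lemma statement comes precisely from the tail $\sum_{n \ge n_\ell^+(s,a)} \frac{1}{n(n+1)} = \frac{1}{n_\ell^+(s,a)}$ of the telescoping series, since only the bound at $n = n_{t_\ell}(s,a)$ is what we actually need). On the complement of all these events, the definition of $B_\ell(s,a)$ is satisfied simultaneously, which is the claim. The $n_\ell^+(s,a) = \max\{n_{t_\ell}(s,a),1\}$ convention handles the degenerate case $n_{t_\ell}(s,a) = 0$, where $B_\ell(s,a)$ becomes the whole simplex (the radius $28\Alsa \ge 1$) and the statement is trivial.

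I expect the main obstacle to be bookkeeping the constants and the precise form of the failure probability, rather than any conceptual difficulty: one must choose the per-$n$ confidence level so that (i) the union bound over $n$ and over $s'$ collapses to the stated $\frac{\delta}{2SAn_\ell^+(s,a)}$, and (ii) the resulting radius matches $4\sqrt{\hatthetal(s'|s,a)\Alsa} + 28\Alsa$ with $\Alsa = \frac{\log(SAn_\ell^+(s,a)/\delta)}{n_\ell^+(s,a)}$. This is essentially the argument of Lemma~B.2 / the confidence-set lemma in \citet{rosenberg2020near}, adapted to our notation, so I would cite their empirical Bernstein inequality and then verify the constants. A secondary point worth a sentence is measurability: since $t_\ell$ is a stopping time and $n_{t_\ell}(s,a)$ is $\calF_{t_\ell}$-measurable, evaluating the anytime bound at the realized count $n = n_{t_\ell}(s,a)$ is legitimate, and no independence between $t_\ell$ and the samples is needed because the bound holds uniformly in $n$.
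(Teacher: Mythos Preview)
Your proposal is correct and follows essentially the same route as the paper: apply an anytime empirical Bernstein inequality (the paper invokes Theorem~D.3 of \citet{rosenberg2020near}) to the Bernoulli indicators $\one_{\{Z_i=s'\}}$ for each $s'\in\calS^+$, then union bound over $s'$ and over sample counts. The one place where your explanation diverges from the paper is the origin of the factor $1/n_\ell^+(s,a)$ in the failure probability. The paper does \emph{not} obtain it from a tail sum $\sum_{n\ge n_\ell^+}\frac{1}{n(n+1)}$; instead it runs the anytime bound with a generic confidence parameter $\delta'$ (using per-$t$ budgets $\delta'/(4St^2)$ so the full sum over $t$ and $s'$ collapses to $\delta'/2$) and only at the very end substitutes $\delta'=\delta/(SAn_\ell^+(s,a))$, which simultaneously makes the radius at count $n_\ell^+(s,a)$ match the definition of $B_\ell(s,a)$ and yields the stated failure probability. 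Your tail-sum reasoning is a little off---since you only need the bound at the single realized count, not for all $n\ge n_\ell^+(s,a)$, there is no tail to sum---but the bookkeeping can be fixed along the paper's lines without changing anything conceptual.
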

\begin{proof}
Fix $(s, a, s') \in \calS\times\calA\times\calS^+$ and $0 < \delta' < 1$ (to be chosen later). Let $(Z_i)_{i=1}^\infty$ be a sequence of random variables drawn from the probability distribution $\theta_*(\cdot|s, a)$. Apply Lemma~\ref{lem: D.3 of cohen} below with $X_i = \one_{\{Z_i = s'\}}$ and $\delta_t = \frac{\delta'}{4St^2}$ to a prefix of length $t$ of the sequence $(X_i)_{i=1}^\infty$, and apply union bound over all $t$ and $s'$ to obtain
\begin{align*}
\abr{\hat\theta_\ell(s'|s, a) - \theta_*(s'|s, a)} \leq 2\sqrt{\frac{\hat\theta_\ell(s'|s, a)\log\frac{8S{n^+_\ell}^2(s, a)}{\delta'}}{n^+_\ell(s, a)}} + 7\log \frac{8S{n^+_\ell}^2(s, a)}{\delta'}
\end{align*}
with probability at least $1 - \delta'/2$ for all $s' \in \calS^+$ and $\ell \geq 1$, simultaneously. Choose $\delta'=\delta/SAn^+_{\ell}(s, a)$ and use $S \geq 2$, $A \geq 2$ to complete the proof.
\end{proof}
\begin{lemma}[Theorem D.3 (Anytime Bernstein) of \cite{rosenberg2020near}]
Let $(X_n)_{n=1}^\infty$ be a sequence of independent and identically distributed random variables with expectation $\mu$. Suppose that $0 \leq X_n \leq B$ almost surely. Then with probability at least $1 - \delta$, the following holds for all $n \geq 1$ simultaneously:
\begin{align*}
\abr{\sum_{i=1}^n(X_i - \mu)} \leq 2\sqrt{B\sum_{i=1}^nX_i \log\frac{2n}{\delta}} + 7B\log\frac{2n}{\delta}.
\end{align*}
\label{lem: D.3 of cohen}
\end{lemma}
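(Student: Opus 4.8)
This lemma is quoted as Theorem~D.3 of \citet{rosenberg2020near}, so strictly speaking it suffices to invoke that reference; nonetheless let me record the argument, which is a standard \emph{empirical} Bernstein inequality made time-uniform. The plan has three steps: a fixed-$n$ Bernstein bound, conversion of the population variance into the empirical sum $\sum_i X_i$, and an anytime upgrade by a union/peeling argument over $n$.

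First I would prove the statement for a fixed $n$. Since $0 \le X_1 \le B$ we have $X_1^2 \le B X_1$, hence $\V(X_1) = \E[X_1^2] - \mu^2 \le \E[X_1^2] \le B\mu$. Applying the classical two-sided Bernstein inequality to the i.i.d.\ mean-zero variables $X_i - \mu \in [-B,B]$: for any $\delta' \in (0,1)$, with probability at least $1-\delta'$,
\[
\Bigl|\sum_{i=1}^n (X_i-\mu)\Bigr| \;\le\; \sqrt{2 n\,\V(X_1)\,\log\tfrac{2}{\delta'}} + \tfrac{2B}{3}\log\tfrac{2}{\delta'} \;\le\; \sqrt{2 n B\mu\,\log\tfrac{2}{\delta'}} + \tfrac{2B}{3}\log\tfrac{2}{\delta'}.
\]

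Second, I would eliminate the unknown quantity $n\mu$ in favour of $\sum_i X_i$. On the event above, $n\mu \le \sum_i X_i + \sqrt{2 n B\mu\log(2/\delta')} + \tfrac{2B}{3}\log(2/\delta')$; reading this as a quadratic inequality in $\sqrt{n\mu}$ and solving it (using $\sqrt{p+q}\le\sqrt p+\sqrt q$ and $\tfrac{a+\sqrt{a^2+4d}}{2}\le a+\sqrt d$) gives $\sqrt{n\mu}\le \sqrt{\sum_i X_i} + c\sqrt{B\log(2/\delta')}$ for an absolute constant $c$. Substituting this back into the first display and collecting terms (being a little generous with the numerical constants) yields the fixed-$n$ empirical bound
\[
\Bigl|\sum_{i=1}^n (X_i-\mu)\Bigr| \;\le\; 2\sqrt{B\Bigl(\textstyle\sum_i X_i\Bigr)\log\tfrac{2}{\delta'}} + 7B\log\tfrac{2}{\delta'}.
\]

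Finally, for the anytime guarantee I would take a union bound over $n\ge1$ with a summable weight sequence, e.g.\ $\delta' \leftarrow \delta_n := \delta/(2n^2)$ (so $\sum_{n\ge1}\delta_n < \delta$), and absorb $\log(1/\delta_n) = \log(2n^2/\delta) \le 2\log(2n/\delta)$ — valid since $\delta\le1$ — into the constants; alternatively, a peeling argument over the dyadic blocks $n\in[2^j,2^{j+1})$ together with Freedman's maximal inequality reproduces the same $\log(2n/\delta)$ dependence while keeping the constants exactly $2$ and $7$. The only step that requires real care is the second one: it must reuse the \emph{same} high-probability event (introducing no further union bound) and the quadratic bookkeeping has to be tracked so that the final constants stay as small as claimed; the time-uniform upgrade in the last step is then routine.
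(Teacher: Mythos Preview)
The paper does not prove this lemma at all: it is stated purely as a citation of Theorem~D.3 in \citet{rosenberg2020near} and is used as a black box inside the proof of Lemma~\ref{lem: high prob bernstein}. So there is no ``paper's own proof'' to compare your attempt against, and your opening remark that it suffices to invoke the reference is exactly what the paper does.

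That said, your sketch is a correct and standard route to an empirical anytime Bernstein bound: fixed-$n$ Bernstein with $\V(X_1)\le B\mu$, solve the resulting quadratic in $\sqrt{n\mu}$ to replace $n\mu$ by $\sum_i X_i$ on the same event, then make it time-uniform by a weighted union bound (or peeling with Freedman). The only caveat is the one you already flag: with the naive choice $\delta_n=\delta/(2n^2)$ and the crude inequality $\log(4n^2/\delta)\le 2\log(2n/\delta)$, the additive constant comes out slightly above $7$ (roughly $7.6$), so recovering the exact constants $2$ and $7$ does require the tighter peeling/Freedman route you mention rather than the simple union bound. Since the lemma is only ever used up to constants in this paper, this is immaterial here.
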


Now, by rewriting the sum in \eqref{eq: pf lem R3 tmp3} over epochs, we have
\begin{align*}
&\E\sbr{\sum_{t=1}^{T_M} \rbr{\one_{\{\theta_*(\cdot|s_t, a_t) \notin B_\ell(s_t, a_t)\}} + \one_{\{\thetalm(\cdot|s_t, a_t) \notin B_\ell(s_t, a_t)\}}}} \\
&= \E\sbr{\suml \sumtl \rbr{\one_{\{\theta_*(\cdot|s_t, a_t) \notin B_\ell(s_t, a_t)\}} + \one_{\{\thetalm(\cdot|s_t, a_t) \notin B_\ell(s_t, a_t)\}}}} \\
&= \sum_{s, a}\E\sbr{\suml \sumtl \one_{\{s_t=s, a_t=a\}}\rbr{\one_{\{\theta_*(\cdot|s, a) \notin B_\ell(s, a)\}} + \one_{\{\thetalm(\cdot|s, a) \notin B_\ell(s, a)\}}}} \\
&= \sum_{s, a}\E\sbr{\suml \rbr{n_{t_{\ell+1}}(s, a) - n_{t_\ell}(s, a)}\rbr{\one_{\{\theta_*(\cdot|s, a) \notin B_\ell(s, a)\}} + \one_{\{\thetalm(\cdot|s, a) \notin B_\ell(s, a)\}}}}.
\end{align*}
Note that $n_{t_{\ell+1}}(s, a) - n_{t_\ell}(s, a) \leq n_{t_\ell}(s, a)+1$ by the second stopping criterion. Moreover, observe that $B_\ell(s, a)$ is $\calF_{t_\ell}$ measurable. Thus, it follows from the property of posterior sampling (Lemma~\ref{lem: property of ps}) that $\E[\one_{\{\thetalm(\cdot|s, a) \notin B_\ell(s, a)\}}|\calF_{t_\ell}] = \E[\one_{\{\theta_*(\cdot|s, a) \notin B_\ell(s, a)\}}|\calF_{t_\ell}] = \mathbb{P}(\theta_*(\cdot|s, a) \notin B_\ell(s, a)|\calF_{t_\ell}) \leq \delta/(2SAn_\ell^+(s, a))$, where the inequality is by Lemma~\ref{lem: high prob bernstein}. Using Monotone Convergence Theorem and that $\one_{\{m(t_\ell) \leq M\}}$ is $\calF_{t_\ell}$ measurable, we can write
\begin{align*}
&\sum_{s, a}\E\sbr{\suml \rbr{n_{t_{\ell+1}}(s, a) - n_{t_\ell}(s, a)}\rbr{\one_{\{\theta_*(\cdot|s, a) \notin B_\ell(s, a)\}} + \one_{\{\thetalm(\cdot|s, a) \notin B_\ell(s, a)\}}}} \\
&\leq \sum_{s, a}\sum_{\ell=1}^\infty\E\sbr{ \one_{\{m(t_\ell) \leq M\}}  \rbr{n_{t_\ell}(s, a)+1}\E\sbr{\one_{\{\theta_*(\cdot|s, a) \notin B_\ell(s, a)\}} + \one_{\{\thetalm(\cdot|s, a) \notin B_\ell(s, a)\}}| \calF_{t_\ell}}} \\
&\leq \sum_{s, a}\sum_{\ell=1}^\infty\E\sbr{ \one_{\{m(t_\ell) \leq M\}}  \rbr{n_{t_\ell}(s, a)+1}\frac{\delta}{SAn_\ell^+(s, a)}} \\
&\leq 2\delta \E[L_M],
\end{align*}
where the last inequality is by $n_{t_\ell}(s, a)+1 \leq 2n_\ell^+(s, a)$ and Monotone Convergence Theorem.
%This completes the proof of the lemma.
%We proceed by bounding \eqref{eq: pf lem R3 tmp1}. Denote by $k(m)$ the episode associated with interval $m$. By monotone convergence theorem and that $\onekm$ is measurable with respect to $\calF_{t_m}$, we have
%\begin{align*}
%&\E\sbr{\summ\sumtm\sqrt{\Alstat\Vlstat}\oneomega} \\
%&= \sum_{m=1}^\infty\E\sbr{\onekm\E\sbr{\sumtm\sqrt{\Alstat\Vlstat}\oneomega \Big| \calF_{t_m}}}.
%\end{align*}

We proceed by bounding \eqref{eq: pf lem R3 tmp1}. Denote by $t_m$ the start time of interval $m$, define $t_{M+1} := T_M+1$, and rewrite the sum in \eqref{eq: pf lem R3 tmp1} over intervals to get
\begin{align*}
\E\sbr{\sum_{t=1}^{T_M}\sqrt{\Alstat\Vlstat}\oneomega} &= \summ\E\sbr{\sumtm\sqrt{\Alstat\Vlstat}\oneomega}
\end{align*}
Applying Cauchy-Schwarz twice on the inner expectation implies
\begin{align*}
&\E\sbr{\sumtm\sqrt{\Alstat\Vlstat}\oneomega } \\
&\leq \E\sbr{\sqrt{\sumtm\Alstat}\cdot \sqrt{\sumtm\Vlstat\oneomega} } \\
&\leq \sqrt{\E\sbr{\sumtm\Alstat }} \cdot \sqrt{\E\sbr{\sumtm\Vlstat\oneomega }} \\
&\leq 7\B \sqrt{\E\sbr{\sumtm\Alstat }},
\end{align*}
where the last inequality is by Lemma~\ref{lem: sum of variance}. Summing over $M$ intervals and applying Cauchy-Schwarz, we get
\begin{align*}
\sum_{m=1}^{M}\E&\sbr{\sumtm\sqrt{\Alstat\Vlstat}\oneomega } \leq 7\B \sum_{m=1}^{M} \sqrt{\E\sbr{\sumtm\Alstat }} \\
&\leq 7\B \sqrt{M  \summ\E\sbr{\sumtm\Alstat }} \\
&= 7\B \sqrt{M  \E\sbr{\sum_{t=1}^{T_M}\Alstat }} \\
&\leq 18 \B \sqrt{MSA  \E\sbr{\log^2\frac{SAT_M}{\delta}}},
\end{align*}
where the last inequality follows from Lemma~\ref{lem: sumt of Alstat}.
Substituting these bounds in \eqref{eq: pf lem R3 tmp1}, \eqref{eq: pf lem R3 tmp2}, \eqref{eq: pf lem R3 tmp3}, concavity of $\log^2 x$ for $x \geq 3$, and applying Jensen's inequality completes the proof.

\begin{lemma}
$\sum_{t=1}^{T_M} \Alstat \leq 6SA \log^2 (SAT_M/\delta).$ \\
%2. For any interval $m$, $\sumtm \Alstat \leq 5\log (SAt_\ell/\delta)$.
\label{lem: sumt of Alstat}
\end{lemma}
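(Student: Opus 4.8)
## Proof Proposal for Lemma~\ref{lem: sumt of Alstat}

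The plan is to convert the sum over time steps into a sum over state-action pairs, split the logarithmic factor out, and then reduce to a standard "harmonic-type" pigeonhole bound on $\sum_{t} 1/n_\ell^+(s_t,a_t)$. Recall that $\Alstat = \frac{\log(SA\, n_\ell^+(s_t,a_t)/\delta)}{n_\ell^+(s_t,a_t)}$. First I would bound the numerator crudely: for any epoch $\ell$ reached before time $T_M$ we have $n_\ell^+(s,a) \le T_M$, so $\log(SA\, n_\ell^+(s,a)/\delta) \le \log(SA\, T_M/\delta)$. This pulls a single factor of $\log(SAT_M/\delta)$ outside the entire sum, leaving
\begin{align*}
\sum_{t=1}^{T_M} \Alstat \;\le\; \log\frac{SAT_M}{\delta}\sum_{t=1}^{T_M}\frac{1}{n_\ell^+(s_t,a_t)},
\end{align*}
where $\ell=\ell(t)$ is the epoch containing $t$.

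Next I would regroup the remaining sum by state-action pair and by epoch. Fix $(s,a)$ and an epoch $\ell$; during that epoch each visit to $(s,a)$ contributes $1/n_\ell^+(s,a) = 1/\max\{n_{t_\ell}(s,a),1\}$, and by the second stopping criterion the number of visits to $(s,a)$ within epoch $\ell$ is at most $n_{t_\ell}(s,a)+1 \le 2n_\ell^+(s,a)$. Hence the total contribution of $(s,a)$ within epoch $\ell$ is at most $2$. That is too lossy if summed over all epochs, so instead the key is to telescope across epochs: the standard argument (as in \citet{jaksch2010near,rosenberg2020near}) shows $\sum_{\ell}\frac{n_{t_{\ell+1}}(s,a)-n_{t_\ell}(s,a)}{n_\ell^+(s,a)} \le 2\log n_{T_M+1}(s,a) + O(1)$ — more precisely, writing $n_j$ for the running count and using $\sum (n_{j+1}-n_j)/n_j \le 2\log n_{\mathrm{final}}$ when counts at most double, plus accounting for the initial $n_{t_\ell}=0$ block which contributes at most $1$. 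Summing over the $SA$ pairs and applying Jensen's inequality ($\sum_{s,a}\log n_{T_M+1}(s,a) \le SA\log(T_M/SA) \le SA\log T_M$) yields $\sum_{t=1}^{T_M} 1/n_\ell^+(s_t,a_t) \le 3SA\log(SAT_M/\delta)$ or so, giving the bound $6SA\log^2(SAT_M/\delta)$ after recombining with the factor pulled out earlier (the constant $6$ comfortably absorbing the $O(1)$ slack and the $\log$ vs $\log^2$ bookkeeping since $\log(SAT_M/\delta)\ge 1$).

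The main obstacle — really a bookkeeping subtlety rather than a deep difficulty — is handling the epochs in which $(s,a)$ has never been visited before, i.e.\ $n_{t_\ell}(s,a)=0$ so $n_\ell^+(s,a)=1$. A priori $(s,a)$ could be visited many times in such an epoch; but the first stopping criterion forces the epoch to end once $n_t(s,a)$ exceeds $2n_{t_\ell}(s,a)=0$, i.e.\ after a single visit (since a second visit would trigger the doubling criterion), so at most one unit is contributed per "first visit" per pair, adding only $SA$ in total — this is where one must carefully invoke the doubling criterion. I would present this as: decompose each pair's visits into the pre-first-visit regime (contributes $\le 1$) and the doubling regime (telescopes to $\le 2\log n_{T_M+1}(s,a)$), sum, and clean up constants.

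\begin{proof}
Since every epoch $\ell$ with $t_\ell \le T_M$ satisfies $n_\ell^+(s_t,a_t)=\max\{n_{t_\ell}(s_t,a_t),1\}\le T_M$, we have $\log(SA\,n_\ell^+(s_t,a_t)/\delta)\le \log(SAT_M/\delta)$, so
\begin{align*}
\sum_{t=1}^{T_M}\Alstat \le \log\frac{SAT_M}{\delta}\,\sum_{t=1}^{T_M}\frac{1}{n_\ell^+(s_t,a_t)} = \log\frac{SAT_M}{\delta}\,\sum_{s,a}\suml\frac{n_{t_{\ell+1}}(s,a)-n_{t_\ell}(s,a)}{n_\ell^+(s,a)},
\end{align*}
where we regrouped the sum by $(s,a)$ and epoch, using that within epoch $\ell$ exactly $n_{t_{\ell+1}}(s,a)-n_{t_\ell}(s,a)$ visits to $(s,a)$ occur, each contributing $1/n_\ell^+(s,a)$. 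Fix $(s,a)$ and let $\ell_1<\ell_2<\cdots$ be the epochs (up to interval $M$) in which $(s,a)$ is visited, with running counts $0 = n_0 \le n_{t_{\ell_1}} < n_{t_{\ell_2}} < \cdots$ and final count $N:=n_{T_M+1}(s,a)$. For the first such epoch, $n_{\ell_1}^+(s,a)\ge 1$ and the number of visits is $n_{t_{\ell_1+1}}(s,a)-n_{t_{\ell_1}}(s,a)\le n_{t_{\ell_1}}(s,a)+1$ by the doubling criterion; if $n_{t_{\ell_1}}(s,a)=0$ this equals $1/1=1$, and in general the term is at most $\frac{n_{t_{\ell_1}}(s,a)+1}{\max\{n_{t_{\ell_1}}(s,a),1\}}\le 2$. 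For subsequent epochs, $n_{\ell_j}^+(s,a)=n_{t_{\ell_j}}(s,a)\ge 1$ and, since counts at most double within an epoch,
\begin{align*}
\sum_{j\ge 1}\frac{n_{t_{\ell_{j+1}}}(s,a)-n_{t_{\ell_j}}(s,a)}{n_{\ell_j}^+(s,a)} \le 2 + \sum_{j\ge 2}\frac{n_{t_{\ell_j}}(s,a)-n_{t_{\ell_{j-1}}}(s,a)}{n_{t_{\ell_{j-1}}}(s,a)} \le 2 + 2\log N \le 3\log(SAT_M/\delta),
\end{align*}
using the standard estimate $\sum (n_j - n_{j-1})/n_{j-1}\le 2\log(n_{\mathrm{final}}/n_1)\le 2\log N$ and $\log(SAT_M/\delta)\ge 1$ with $N\le T_M$. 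Summing over all $(s,a)$ and applying Jensen's inequality, $\sum_{s,a}\log N_{s,a}\le SA\log\frac{\sum_{s,a}N_{s,a}}{SA}=SA\log\frac{T_M}{SA}\le SA\log T_M$, we get $\sum_{s,a}\suml\frac{n_{t_{\ell+1}}(s,a)-n_{t_\ell}(s,a)}{n_\ell^+(s,a)}\le 6SA\log(SAT_M/\delta)$. Combining with the first display yields $\sum_{t=1}^{T_M}\Alstat\le 6SA\log^2(SAT_M/\delta)$.
\end{proof}
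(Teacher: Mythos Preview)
Your approach is correct and follows the same outline as the paper: bound the numerator uniformly by $\log(SAT_M/\delta)$ and then control $\sum_t 1/n_\ell^+(s_t,a_t)$ using the doubling criterion. The paper's execution is slicker, however: rather than summing over epochs and telescoping a doubling sequence, it uses the single pointwise inequality $n_\ell^+(s,a)\ge \tfrac12\, n_t^+(s,a)$ (immediate from the second stopping criterion) to replace the epoch-frozen count by the running count, after which the sum over visits to $(s,a)$ collapses to the ordinary harmonic series $1+\sum_{j=1}^{N-1}1/j\le 2+\log N$. This sidesteps all the first-visit and index-shift bookkeeping that your version has to manage.

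One genuine glitch in your writeup: in the displayed chain you pass to
\[
2 + \sum_{j\ge 2}\frac{n_{t_{\ell_j}}(s,a)-n_{t_{\ell_{j-1}}}(s,a)}{n_{t_{\ell_{j-1}}}(s,a)},
\]
but the $j=2$ term has denominator $n_{t_{\ell_1}}(s,a)=0$. The sum you actually want (no index shift) is $\sum_{j\ge 2}\tfrac{n_{t_{\ell_{j+1}}}(s,a)-n_{t_{\ell_j}}(s,a)}{n_{t_{\ell_j}}(s,a)}$; then $x-1\le 2\log x$ for $x=n_{j+1}/n_j\in[1,3]$ (note $n_{j+1}\le 2n_j+1\le 3n_j$ once $n_j\ge 1$) telescopes this to $2\log(N/n_{t_{\ell_2}})\le 2\log N$, and your final bound goes through with the same constant $6$. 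Also, it is the \emph{second} stopping criterion, not the first, that forces only one visit when $n_{t_\ell}(s,a)=0$.
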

\begin{proof}
Recall $\Alsa = \Aldef$. Denote by $L := \log(SAT_M/\delta)$, an upper bound on the numerator of $\Alstat$. we have
\begin{align*}
\sum_{t=1}^{T_M}\Alstat &\leq \sum_{t=1}^{T_M}\frac{L}{n_\ell^+(s_t, a_t)} = L\sum_{s, a}\sum_{t=1}^{T_M}\frac{\one_{\{s_t=s, a_t=a\}}}{n_\ell^+(s, a)} \\
&\leq 2L\sum_{s, a}\sum_{t=1}^{T_M}\frac{\one_{\{s_t=s, a_t=a\}}}{n_t^+(s, a)} = 2L\sum_{s, a}\one_{\{n_{T_M+1}(s, a) > 0\}} + 2L\sum_{s, a} \sum_{j=1}^{n_{T_M+1}(s, a)-1}\frac{1}{j} \\
&\leq 2LSA + 2L\sum_{s, a}(1 + \log n_{T_M+1}(s, a)) \\
&\leq 4LSA + 2LSA \log T_M \leq 6LSA \log T_M.
\end{align*}
Here the second inequality is by $n_\ell^+(s, a) \geq 0.5 n_t^+(s, a)$ (the second criterion in determining the epoch length), the third inequality is by $\sum_{x=1}^n1/x \leq 1 + \log n$, and the fourth inequality is by $n_{T_M+1}(s, a) \leq T_M$. The proof is complete by noting that $\log T_M \leq L$.
%
%2. By the definition of interval, all the state-action pairs in an interval except possibly the first one $(s_{t_m}, a_{t_m})$ are known. Since for a known state-action pair $(s, a)$ we have $n_\ell^+(s, a) \geq \alpha \cdot \frac{\B S}{\cmin} \log \frac{\B SAT_M}{\delta\cmin}$ for some large enough $\alpha$, and that $\log (x)/x$ is decreasing, we can write
%\begin{align*}
%\sumtm \Alstat &= \frac{\log(S^2A^2t_\ell^3n_\ell^+(s_{t_m}, a_{t_m})/\delta)}{n_\ell^+(s_{t_m}, a_{t_m})} + \sum_{t=t_m+1}^{t_{m+1}-1} \frac{\log(S^2A^2t_\ell^3n_\ell^+(s_t, a_t)/\delta)}{n_\ell^+(s_t, a_t)} \\
%&\leq \log (S^2A^2t_\ell^3/\delta) + \frac{\cmin(t_{m+1}-t_m)}{\B} \\
%&\leq 3\log (SAt_\ell/\delta) + 2 \\
%&\leq 5\log (SAt_\ell/\delta),
%\end{align*}
%where the second inequality is by $t_{m+1}-t_m \leq 2\B/\cmin$.
\end{proof}

%\begin{lemma}[Lemma D.4. of \citet{rosenberg2020near}]
%\label{lem: d4 of cohen}
%Let $(X_i)_{i=1}^\infty$ be a sequence of random variables adapted to the filtration $(\calF_i)_{i=0}^\infty$. Suppose that $0 \leq X_i \leq B$ almost surely. Then, with probability at least $1 - \delta$, the following holds for all $n \geq 1$ simultaneously:
%\begin{align*}
%\sum_{i=1}^n \E\sbr{X_i | \calF_{i-1}} \leq 2\sum_{i=1}^nX_i + 4 B\log \frac{2n}{\delta}.
%\end{align*}
%\end{lemma}

\begin{lemma}
For any interval $m$, $\E[\sumtm \Vlstat \one_{\Omega^\ell}] \leq 44\B^2$.
\label{lem: sum of variance}
\end{lemma}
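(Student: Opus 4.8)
The plan is to bound the empirical variance $\Vlstat$ summed over an interval by relating it to the square of the accumulated value along the trajectory, following the classical "law of total variance" type argument used for Bernstein-based regret bounds in the SSP / finite-horizon literature (e.g.\ Azar et al., Rosenberg--Mansour). First I would recall that by definition $\Vlstat = \sumsp \theta_*(s'|s_t,a_t)\rbr{V(s';\thetalm) - \sumsdp\theta_*(s''|s_t,a_t)V(s'';\thetalm)}^2$, i.e.\ it is the conditional variance of $V(s_t';\thetalm)$ given $\calF_t, \theta_*, \theta_\ell$, where $s_t' \sim \theta_*(\cdot|s_t,a_t)$. Using the identity $\E[(Z - \E Z)^2] = \E[Z^2] - (\E Z)^2$ and the Bellman equation for $\theta_\ell$ (namely $\sum_{s'}\theta_\ell(s'|s_t,a_t)V(s';\theta_\ell) = V(s_t;\theta_\ell) - c(s_t,a_t)$), I would rewrite each term so that, after taking conditional expectations and telescoping along the interval, the dominant contribution becomes $\E\sbr{\rbr{\sumtm c(s_t,a_t)}^2}$ plus cross terms. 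Since an interval terminates (condition (i)) as soon as its accumulated cost reaches $\B$, and any single cost is at most $1$, the total cost in an interval is at most $2\B$; hence the square of the accumulated cost is at most $4\B^2$.

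Concretely, the key steps in order: (1) write $\Vlstat$ as the conditional variance of $V(s_t';\thetalm)$; (2) use $\V(X) = \E[(X-a)^2] - (\E[X]-a)^2 \le \E[(X-a)^2]$ for any constant $a$, or more precisely expand and use the tower property so that $\E\sbr{\sumtm \Vlstat \mid \ldots}$ is controlled by $\E\sbr{\sum_{t=t_m}^{t_{m+1}-1}\rbr{V(s_{t+1}';\thetalm)^2 - V(s_t;\thetalm)^2 + 2 c(s_t,a_t) V(s_t;\thetalm) - c(s_t,a_t)^2 + \text{boundary}}}$ — i.e.\ convert the sum of variances into a telescoping sum of squared values plus a cost-weighted term; (3) the telescoping sum of $V(\cdot;\thetalm)^2$ over the interval is at most $\B^2$ by Assumption~\ref{ass: class of ssp} (and one must handle the interval-boundary / epoch-boundary terms, noting $V(g;\thetalm)=0$ and that when an interval ends we may lose at most one $\B^2$ term); (4) the cross term $2\sum c(s_t,a_t) V(s_t;\thetalm) \le 2\B \sumtm c(s_t,a_t) \le 2\B \cdot 2\B = 4\B^2$ using the value bound and the interval cost bound; (5) collect constants. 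The indicator $\one_{\Omega^\ell}$ only helps (it can be dropped upward) and the restriction to known state-action pairs is what keeps the variance proxy from blowing up — I would invoke the interval definition (condition (ii)) so that within an interval all visited pairs except possibly the first are known, and bound the single unknown-pair contribution crudely by $\B^2$.

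The main obstacle I anticipate is the bookkeeping at the interval and epoch boundaries: the telescoping argument for $\sum (V(s_t;\thetalm)^2 - V(s_{t+1};\thetalm)^2)$ is clean within a fixed epoch, but an interval can straddle the start of an epoch only trivially (an epoch change forces an interval change by condition (iv)), so in fact $\thetalm$ is constant throughout an interval — this should simplify matters, but one must still carefully account for the last step of an episode where $s_t' = g$ versus $s_{t+1}=\sinit$, and for the case where the interval ends by the cost trigger (i) rather than by reaching the goal, in which case there is a residual $+V(s_{t_{m+1}};\thetalm)^2 \le \B^2$ term that does not cancel. Handling these $O(1)$ leftover $\B^2$ terms is what produces the final constant $44$ rather than something smaller; I would not optimize the constant. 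A secondary subtlety is justifying the interchange of expectation and the (random-length) interval sum, which I would handle exactly as in the proof of Lemma~\ref{lem: r3} via the Monotone/Dominated Convergence Theorem and measurability of $\one_{\{m(t)\le M\}}$ with respect to $\calF_t$.
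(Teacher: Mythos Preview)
Your telescoping argument has a real gap: the variance $\Vlstat$ is taken with respect to $\theta_*$, but the Bellman equation $V(s_t;\theta_\ell) - c(s_t,a_t) = \sum_{s'}\theta_\ell(s'|s_t,a_t)V(s';\theta_\ell)$ holds under $\theta_\ell$, not $\theta_*$. Your step~(2), which rewrites $\Vlstat$ as $\E_{\theta_*}[V(s_t';\theta_\ell)^2] - (V(s_t;\theta_\ell)-c(s_t,a_t))^2$ and then telescopes, implicitly assumes $\E_{\theta_*}[V(s';\theta_\ell)\mid s_t,a_t] = V(s_t;\theta_\ell)-c(s_t,a_t)$, which is false. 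The missing piece is the model-mismatch term $\sum_{s'}(\theta_*-\theta_\ell)(s'|s_t,a_t)V(s';\theta_\ell)$, and this term, summed over the interval (whose length can be as large as $2\B/\cmin$), is \emph{not} an $O(\B)$ boundary correction --- it is the whole difficulty.

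This is exactly why $\one_{\Omega^\ell}$ and the known--state-action restriction are not mere bookkeeping. In the paper, the proof first applies a martingale square identity (their Lemma~\ref{lem: b15 of cohen}) to write $\E[\sum_t \Vlstat\one_{\Omega^\ell}] = \E[(\sum_t Z_\ell^t)^2]$, then decomposes each $Z_\ell^t$ into a telescoping piece, a Bellman (cost) piece, and the model-mismatch piece $\sum_{s'}(\theta_\ell-\theta_*)V$. The latter is controlled via Lemma~\ref{lem: known state-action}, which requires \emph{both} the event $\Omega^\ell$ and that $(s_t,a_t)$ be known; it yields a bound of the form $\tfrac{1}{4}\sqrt{2\sum_t\Vlstat\one_{\Omega^\ell}} + \B$. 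This produces a self-referential inequality $\E[\sum_t\Vlstat\one_{\Omega^\ell}] \le 32\B^2 + \tfrac{1}{4}\E[\sum_t\Vlstat\one_{\Omega^\ell}]$, which is then solved. Your proposal omits this bootstrap step entirely, and your remark that ``$\one_{\Omega^\ell}$ only helps (it can be dropped upward)'' is incorrect: without it Lemma~\ref{lem: known state-action} is unavailable and the mismatch term is uncontrolled. Your plan could be salvaged, but only by inserting this model-mismatch analysis and the resulting self-referential inequality --- at which point it essentially becomes the paper's proof.
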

\begin{proof}
To proceed with the proof, we need the following two technical lemmas.
\begin{lemma}
Let $(s, a)$ be a known state-action pair and $m$ be an interval. If $\Omega^\ell_{s, a}$ holds, then for any state $s' \in \calS^+$,
\begin{align*}
\abs{\theta_*(s'|s, a) - \theta_\ell(s'|s, a)} \leq \frac{1}{8} \sqrt{\frac{\cmin \theta_*(s'|s, a)}{S\B}} + \frac{\cmin}{4S\B}.
\end{align*}
\label{lem: known state-action}
\end{lemma}
\begin{proof}
From Lemma~\ref{lem: theta_star minus theta_l}, we know that if $\Omega^\ell_{s, a}$ holds, then
\begin{align*}
\abr{\theta_*(s'|s, a) - \theta_\ell(s'|s, a)} \leq 8\sqrt{\theta_*(s'|s, a)\Alsa} + 136\Alsa,
\end{align*}
with $\Alsa = \Aldef$. The proof is complete by noting that $\log (x)/x$ is decreasing, and that $n_\ell^+(s, a) \geq \alpha \cdot \frac{\B S}{\cmin} \log \frac{\B SA}{\delta\cmin}$ for some large enough constant $\alpha$ since $(s, a)$ is known.
\end{proof}
\begin{lemma}[Lemma B.15. of \citet{rosenberg2020near}]
Let $(X_t)_{t=1}^\infty$ be a martingale difference sequence adapted to the filtration $(\calF_t)_{t=0}^\infty$. Let $Y_n = (\sum_{t=1}^nX_t)^2 - \sum_{t=1}^n\E[X_t^2|\calF_{t-1}]$. Then $(Y_n)_{n=0}^\infty$ is a martingale, and in particular if $\tau$ is a stopping time such that $\tau \leq c$ almost surely, then $\E[Y_\tau] = 0$.
\label{lem: b15 of cohen}
\end{lemma}
By the definition of the intervals, all the state-action pairs within an interval except possibly the first one are known. Therefore, we bound
\begin{align*}
\E\sbr{\sumtm \Vlstat \oneomega \Big| \calF_{t_m}} = \E\sbr{\mathbb{V}_\ell(s_{t_m}, a_{t_m})\oneomega | \calF_{t_m}} + \E\sbr{\sumtmplus \Vlstat \oneomega \Big| \calF_{t_m}}.
\end{align*}
The first summand is upper bounded by $\B^2$. To bound the second term, define $Z_\ell^t := [V(s'_{t};\thetalm) - \sum_{s' \in \calS}\theta_*(s'|s_t, a_t)V(s';\thetalm)]\oneomega$. Conditioned on $\calF_{t_m}, \theta_*$ and $\thetalm$, $(Z_\ell^t)_{t \geq t_m}$ constitutes a martingale difference sequence with respect to the filtration $(\calF^m_{t+1})_{t \geq t_m}$, where $\calF^m_t$ is the sigma algebra generated by $\{(s_{t_m}, a_{t_m}), \cdots, (s_t, a_t)\}$. Moreover, $t_{m+1}-1$ is a stopping time with respect to $(\calF^m_{t+1})_{t \geq t_m}$ and is bounded by $t_m + 2\B/\cmin$. Therefore, Lemma~\ref{lem: b15 of cohen} implies that
\begin{align}
\label{eq: pf lem bounded v tmp0}
\E\sbr{\sumtmplus \Vlstat \oneomega \Big| \calF_{t_m}, \theta_*, \thetalm} = \E\sbr{\rbr{\sumtmplus Z_\ell^t \oneomega}^2 \Big| \calF_{t_m}, \theta_*, \thetalm}.
\end{align}
We proceed by bounding $\abs{\sumtmplus Z_\ell^t \oneomega}$ in terms of $\sumtmplus \Vlstat \oneomega$ and combine with the left hand side to complete the proof. We have
\begin{align}
&\abr{\sumtmplus Z_\ell^t \oneomega} = \abr{\sumtmplus \sbr{V(s'_{t};\thetalm) - \sum_{s' \in \calS}\theta_*(s'|s_t, a_t)V(s';\thetalm)}\oneomega} \nonumber \\
&\leq \abr{\sumtmplus \sbr{V(s'_{t};\thetalm) - V(s_{t};\thetalm)}} \label{eq: pf lem bounded v tmp1}\\
&+ \abr{\sumtmplus \sbr{V(s_{t};\thetalm) - \sum_{s' \in \calS}\thetalm(s'|s_t, a_t)V(s';\thetalm)}} \label{eq: pf lem bounded v tmp2}\\
& + \abr{\sumtmplus \sumsp \sbr{\thetalm(s'|s_t, a_t) - \theta_*(s'|s_t, a_t)}\rbr{V(s';\thetalm) - \sumsdp \theta_*(s''|s_t, a_t)V(s'';\thetalm)}\oneomega}. \label{eq: pf lem bounded v tmp3}
\end{align}
where \eqref{eq: pf lem bounded v tmp3} is by the fact that $\thetalm(\cdot|s_t, a_t), \theta_*(\cdot|s_t, a_t)$ are probability distributions and $\sumsdp \theta_*(s''|s_t, a_t)V(s'';\thetalm)$ is independent of $s'$ and $V(g;\thetalm) = 0$. \eqref{eq: pf lem bounded v tmp1} is a telescopic sum (recall that $s_{t+1} = s'_t$ if $s'_t \neq g$) and is bounded by $\B$. It follows from the Bellman equation that \eqref{eq: pf lem bounded v tmp2} is equal to $\sumtmplus c(s_t, a_t)$. By definition, the interval ends as soon as the cost accumulates to $\B$ during the interval. Moreover, since $V(\cdot; \thetalm) \leq \B$, the algorithm does not choose an action with instantaneous cost more than $\B$. This implies that $\sumtmplus c(s_t, a_t) \leq 2\B$. To bound \eqref{eq: pf lem bounded v tmp3} we use the Bernstein confidence set, but taking into account that all the state-action pairs in the summation are known, we can use Lemma~\ref{lem: known state-action} to obtain
\begin{align*}
& \sumsp \rbr{\thetalm(s'|s_t, a_t) - \theta_*(s'|s_t, a_t)}\rbr{V(s';\thetalm) - \sumsdp \theta_*(s''|s_t, a_t)V(s'';\thetalm)}\oneomega \\
&\leq  \sumsp \frac{1}{8} \sqrt{\frac{\cmin \theta_*(s'|s_t, a_t)\rbr{V(s';\thetalm) - \sumsdp \theta_*(s''|s_t, a_t)V(s'';\thetalm)}^2\oneomega}{S\B}} \\
&\qquad + \sumsp \frac{\cmin}{4S\B}\abr{V(s';\thetalm) - \sumsdp \theta_*(s''|s_t, a_t)V(s'';\thetalm)} \\
&\leq \frac{1}{4}\sqrt{\frac{\cmin \Vlstat \oneomega}{\B}} + \frac{c(s_t, a_t)}{2}.
\end{align*}
The last inequality follows from Cauchy-Schwarz inequality, $|\calS^+| \leq 2S$, $|V(\cdot;\thetalm)| \leq \B$, and $\cmin \leq c(s_t, a_t)$. Summing over the time steps in interval $m$ and applying Cauchy-Schwarz, we get
\begin{align*}
\sumtmplus \sbr{\frac{1}{4}\sqrt{\frac{\cmin \Vlstat \oneomega}{\B}} + \frac{c(s_t, a_t)}{2}} &\leq \frac{1}{4}\sqrt{(t_{m+1} - t_m)\frac{\cmin \sumtmplus\Vlstat \oneomega}{\B}} \\
&\qquad+ \frac{\sumtmplus c(s_t, a_t)}{2} \\
&\leq \frac{1}{4}\sqrt{2\sumtmplus \Vlstat \oneomega} + \B.
\end{align*}
The last inequality follows from the fact that duration of interval $m$ is at most $2\B/\cmin$ and its cumulative cost is at most $2\B$. Substituting these bounds into \eqref{eq: pf lem bounded v tmp0} implies that
\begin{align*}
\E\sbr{\sumtmplus \Vlstat \oneomega \Big| \calF_{t_m}, \theta_*, \thetalm} &\leq \E\sbr{\rbr{4\B + \frac{1}{4}\sqrt{2\sumtmplus \Vlstat \oneomega}}^2 \Big| \calF_{t_m}, \theta_*, \thetalm} \\
&\leq 32\B^2 + \frac{1}{4} \E\sbr{\sumtmplus \Vlstat \oneomega \Big| \calF_{t_m}, \theta_*, \thetalm},
\end{align*}
where the last inequality is by $(a+b)^2 \leq 2(a^2 + b^2)$ with $b =\frac{1}{4}\sqrt{2\sumtmplus \Vlstat \oneomega}$ and $a = 4\B$. Rearranging implies that $\E\sbr{\sumtmplus \Vlstat \oneomega | \calF_{t_m}, \theta_*, \thetalm} \leq 43\B^2$ and the proof is complete.
\end{proof}
\end{proof}

\subsection{Proof of Theorem~\ref{thm1}}\label{app:proof:thm1}
\textbf{Theorem} (restatement of Theorem~\ref{thm1})\textbf{.}
Suppose Assumptions~\ref{ass: class of ssp} and ~\ref{ass: cmin} hold. Then, the regret bound of the \ssp~algorithm is bounded as
	\begin{align*}
		R_K = \order\rbr{\B S \sqrt{KA}L^2 + S^2A \sqrt{\frac{{\B}^3}{\cmin}}L^2},
	\end{align*}
	where $L = \log (\B SAK\cmininv)$.
\begin{proof}
Denote by $C_M$ the total cost after $M$ intervals. Recall that 
\begin{align*}
&\E[C_M] = K\E[V(\sinit;\theta_*)] + R_M = K\E[V(\sinit;\theta_*)] + R_M^1 + R_M^2 + R_M^3
\end{align*}
Using Lemmas~\ref{lem: bounding R1}, \ref{lem: r2}, and \ref{lem: r3} with $\delta=1/K$ obtains
\begin{align}
\E[C_M]&\leq K\E[V(\sinit;\theta_*)] \nonumber \\
&+ \order\rbr{\B\E[L_M] + \B S \sqrt{MA \log^2(SAK\E[T_M])} + \B S^2A\log^2(SAK\E[T_M])}. \label{eq: pf thm1 tmp 1}
\end{align}
Recall that $L_M \leq  \sqrt{2SAK\log T_M} + SA\log T_M$. Taking expectation from both sides and using Jensen's inequality gets us $\E[L_M] \leq \sqrt{2SAK \log \E[T_M]} + SA \log \E[T_M]$.
Moreover, taking expectation from both sides of \eqref{eq: bound on m}, plugging in the bound on $\E[L_M]$, and concavity of $\log(x)$ implies 
\begin{align*}
M \leq \frac{\E[C_{M}]}{\B} + K + \sqrt{2SAK \log \E[T_M]} + SA \log \E[T_M] + \order\rbr{\frac{\B S^2A}{\cmin}\log \frac{\B KSA}{\cmin}}.
\end{align*}
Substituting this bound in \eqref{eq: pf thm1 tmp 1}, using subadditivity of the square root, and simplifying yields
\begin{align*}
\E[C_M] &\leq K\E[V(\sinit;\theta_*)] + \order\Bigg(\B S \sqrt{KA\log^2 (SAK\E[T_M])} +  S \sqrt{\B \E[C_M]A \log^2 (SAK\E[T_M])}\\
&+ \B S^\frac{5}{4}A^\frac{3}{4}K^\frac{1}{4} \log^\frac{5}{4}(SAK\E[T_M]) + S^2A\sqrt{\frac{\B^3}{\cmin}\log^3 \frac{\B SAK \E[T_M]}{\cmin}}\Bigg).
\end{align*}
Solving for $\E[C_M]$ (by using the primary inequality that $x \leq a \sqrt{x} + b$ implies $x \leq (a + \sqrt b)^2$ for $a, b > 0$), using $K \geq S^2A$, $V(\sinit;\theta_*) \leq \B$, and simplifying the result gives
\begin{align}
\label{eq: pf thm1 tmp2}
&\E[C_M] \leq \Bigg(\order\rbr{S \sqrt{\B A \log^2 (SAK\E[T_M])}} \nonumber \\
&+ \sqrt{K\E[V(\sinit;\theta_*)] + \order\rbr{\B S \sqrt{KA\log^{2.5} (SAK\E[T_M])} + S^2A\sqrt{\frac{\B^3}{\cmin}\log^3 \frac{\B SAK \E[T_M]}{\cmin}}}} \Bigg)^2 \nonumber \\
&\leq \order\rbr{\B S^2A\log^2 \frac{SA\E[T_M]}{\delta}} \nonumber \\
&\quad+ K\E[V(\sinit;\theta_*)] + \order\Bigg(\B S \sqrt{KA\log^{2.5} (SAK\E[T_M])} + S^2A\sqrt{\frac{\B^3}{\cmin}\log^3 \frac{\B SAK \E[T_M]}{\cmin}} \nonumber \\
&\quad+ \B S \sqrt{KA\log^{4} (SAK\E[T_M])} + S^2A\rbr{\frac{{\B}^5}{\cmin}\log^7 \frac{\B SAK\E[T_M]}{\cmin}}^\frac{1}{4}\Bigg) \nonumber \\
&\leq K\E[V(\sinit;\theta_*)] + \order\rbr{\B S \sqrt{KA\log^{4} SAK\E[T_M])} + S^2A \sqrt{\frac{\B^3}{\cmin}\log^4 \frac{\B SAK\E[T_M]}{\cmin}}}.
\end{align}
Note that by simplifying this bound, we can write $\E[C_M] \leq \order\rbr{\sqrt{{\B}^3S^4A^2K^2\E[T_M]/\cmin}}$. On the other hand, we have that $\cmin T_M \leq C_M$ which implies $\E[T_M] \leq \E[C_M]/\cmin$. Isolating $\E[T_M]$ implies $\E[T_M] \leq \order\rbr{{\B}^3S^4A^2K^2/c^3_\text{min}}$. Substituting this bound into \eqref{eq: pf thm1 tmp2} yields
\begin{align*}
&\E[C_M] \leq K\E[V(\sinit;\theta_*)] + \order\rbr{\B S \sqrt{KA\log^{4} \frac{\B SAK}{\cmin}} + S^2A \sqrt{\frac{\B^3}{\cmin}\log^4 \frac{\B SAK}{\cmin}}}.
\end{align*}
We note that this bound holds for any number of $M$ intervals as long as the $K$ episodes have not elapsed. Since, $\cmin > 0$, this implies that the $K$ episodes eventually terminate and the claimed bound of the theorem for $R_K$ holds.
\end{proof}

\subsection{Proof of Theorem~\ref{thm2}}\label{sec:proof:thm2}
\textbf{Theorem} (restatement of Theorem~\ref{thm2})\textbf{.}
Suppose Assumption~\ref{ass: class of ssp} holds. Running the \ssp~algorithm with costs $c_\epsilon(s, a) := \max \{c(s, a), \epsilon\}$ for $\epsilon = (S^2A/K)^{2/3}$ yields
	\begin{align*}
		R_K = \order\rbr{\B S \sqrt{KA}\tilde{L}^2 + (S^2A)^\frac{2}{3}K^\frac{1}{3}(\B^\frac{3}{2}\tilde{L}^2 + \T) + S^2A\T^\frac{3}{2}\tilde{L}^2},
	\end{align*}
	where $\tilde L := \log (K\B\T SA)$ and $\T$ is an upper bound on the expected time the optimal policy takes to reach the goal from any initial state.
\begin{proof}
Denote by $T_K^\epsilon$ the time to complete $K$ episodes if the algorithm runs with the perturbed costs $c_\epsilon(s, a)$ and let $V_\epsilon(\sinit;\theta_*)$, $V^\pi_\epsilon(\sinit;\theta_*)$ be the optimal value function and the value function for policy $\pi$ in the SSP with cost function $c_\epsilon(s, a)$ and transition kernel $\theta_*$. We can write
\begin{align}
R_K&= \E\sbr{\sum_{t=1}^{T_K^\epsilon}c(s_t, a_t) - KV(\sinit;\theta_*)} \nonumber \\
&\leq \E\sbr{\sum_{t=1}^{T_K^\epsilon}c_\epsilon(s_t, a_t) - KV(\sinit;\theta_*)} \nonumber \\
&=  \E\sbr{\sum_{t=1}^{T_K^\epsilon}c_\epsilon(s_t, a_t) - KV_\epsilon(\sinit;\theta_*)} + K\E\sbr{V_\epsilon(\sinit;\theta_*) - V(\sinit;\theta_*)}. \label{eq: thm2 tmp1}
\end{align}
Theorem~\ref{thm1} implies that the first term is bounded by
\begin{align*}
\E\sbr{\sum_{t=1}^{T_K^\epsilon}c_\epsilon(s_t, a_t) - KV_\epsilon(\sinit;\theta_*)} = \order\rbr{\B^\epsilon S \sqrt{KA}L_\epsilon^2 + S^2A \sqrt{\frac{{\B^\epsilon}^3}{\epsilon}}L_\epsilon^2},
\end{align*}
with $L_\epsilon = \log (\B^\epsilon SAK/\epsilon)$ and $\B^\epsilon \leq \B + \epsilon \T$ (to see this note that $V_\epsilon(s;\theta_*) \leq V^{\pi^*}_\epsilon(s;\theta_*) \leq \B + \epsilon \T$).
To bound the second term of \eqref{eq: thm2 tmp1}, we have
\begin{align*}
V_\epsilon(\sinit;\theta_*) \leq V^{\pi^*}_\epsilon(\sinit;\theta_*) \leq V(\sinit;\theta_*) + \epsilon\T.
\end{align*}
Combining these bounds, we can write
\begin{align*}
R_K &= \order\rbr{\B S \sqrt{KA}L_\epsilon^2 + \epsilon\T S \sqrt{KA}L_\epsilon^2 + S^2A \sqrt{\frac{(\B + \epsilon \T)^3}{\epsilon}}L_\epsilon^2 + K \T\epsilon}.
\end{align*}
Substituting $\epsilon = (S^2A/K)^{2/3}$, and simplifying the result with $K \geq S^2A$ and $\B \leq \T$ (since $c(s, a) \leq 1$) implies
\begin{align*}
R_K = \order\rbr{\B S \sqrt{KA}\tilde{L}^2 + (S^2A)^\frac{2}{3}K^\frac{1}{3}(\B^\frac{3}{2}\tilde{L}^2 + \T) + S^2A\T^\frac{3}{2}\tilde{L}^2},
\end{align*}
where $\tilde L = \log (K\B\T SA)$. This completes the proof.
\end{proof}

\end{document}